\theoremstyle{plain}
\newtheorem{theorem}{Theorem}
\newtheorem{lemma}{Lemma}
\newtheorem{corollary}{Corollary}
\theoremstyle{definition}
\newtheorem{definition}{Definition}
\theoremstyle{remark}
\newcommand{\cmark}{\ding{51}} %
\newcommand{\xmark}{\ding{55}} %
\titlespacing*{\paragraph}{0pt}{0pt}{6pt}
\newcommand{\fmtcaption}[2]{\caption{\textbf{#1}. #2}}
\newcommand{\fmtcaptionof}[3]{\captionof{#1}{\textbf{#2}. #3}}
\newcommand*\samethanks[1][\value{footnote}]{\footnotemark[#1]}
\def\eqref#1{equation~\ref{#1}}
\def\1{\bm{1}}
\def\rvf{{\mathbf{f}}}
\def\rvh{{\mathbf{h}}}
\def\vb{{\bm{b}}}
\def\vc{{\bm{c}}}
\def\vd{{\bm{d}}}
\def\vf{{\bm{f}}}
\def\vj{{\bm{j}}}
\def\vu{{\bm{u}}}
\def\vv{{\bm{v}}}
\def\vx{{\bm{x}}}
\def\vy{{\bm{y}}}
\def\mA{{\bm{A}}}
\def\mB{{\bm{B}}}
\def\mI{{\bm{I}}}
\def\mJ{{\bm{J}}}
\def\mL{{\bm{L}}}
\def\mQ{{\bm{Q}}}
\def\mR{{\bm{R}}}
\def\mS{{\bm{S}}}
\def\mU{{\bm{U}}}
\def\mW{{\bm{W}}}
\def\mX{{\bm{X}}}
\def\mY{{\bm{Y}}}
\DeclareMathAlphabet{\mathsfit}{\encodingdefault}{\sfdefault}{m}{sl}
\SetMathAlphabet{\mathsfit}{bold}{\encodingdefault}{\sfdefault}{bx}{n}
\newcommand{\pdata}{p_{\rm{data}}}
\newcommand{\R}{\mathbb{R}}
\titlespacing\section{0pt}{8pt plus 4pt minus 2pt}{0pt plus 2pt minus 2pt}
\titlespacing\subsection{0pt}{6pt plus 2pt minus 2pt}{0pt plus 2pt minus 2pt}
\titlespacing\subsubsection{0pt}{6pt plus 2pt minus 2pt}{0pt plus 2pt minus 2pt}
\title{Equivariance by Contrast: Identifiable Equivariant Embeddings from Unlabeled Finite Group Actions}
\author{
    Tobias Schmidt$^{1,2}$,
    Steffen Schneider$^{1,2,3}$\thanks{Co-corresponding authors: steffen.schneider@helmholtz-munich.de, matthias.bethge@bethgelab.org.}
    \, and Matthias Bethge$^3$\samethanks\\
    $^1$Institute of Computational Biology, Helmholtz Munich \\
    $^2$Munich Center for Machine Learning (MCML)\\
    $^3$T\"ubingen AI Center
}
\begin{document}

\doparttoc
\faketableofcontents
\maketitle

\begin{abstract}
    We propose Equivariance by Contrast (EbC) to learn equivariant embeddings from observation pairs $(\mathbf{y}, g \cdot \mathbf{y})$, where $g$ is drawn from a finite group acting on the data. Our method jointly learns a latent space and a group representation in which group actions correspond to invertible linear maps—without relying on group-specific inductive biases. We validate our approach on the infinite dSprites dataset with structured transformations defined by the finite group $G:= (R_m \times \mathbb{Z}_n \times \mathbb{Z}_n)$, combining discrete rotations and periodic translations. The resulting embeddings exhibit high-fidelity equivariance, with group operations faithfully reproduced in latent space. On synthetic data, we further validate the approach on the non-abelian orthogonal group $O(n)$ and the general linear group $GL(n)$. We also provide a theoretical proof for identifiability. While broad evaluation across diverse group types on real-world data remains future work, our results constitute the first successful demonstration of general-purpose encoder-only equivariant learning from group action observations alone, including non-trivial non-abelian groups and a product group motivated by modeling affine equivariances in computer vision.
\end{abstract}

\section{Introduction}

    \begin{wrapfigure}{r}{0.25\textwidth}
        \def\figsep{0.8cm}
        \centering
        \begin{tikzpicture}
        \node (x) at (0, 0) {$\vx$};
        \node [right=\figsep of x] (xp) {$\vx'$};
        
        \node [below=\figsep of x] (y) {$\vy$};
        \node [right=\figsep of y] (yp) {$\vy'$};
        
        \node [below=\figsep of y] (xh) {$\hat \vx$};
        \node [right=\figsep of xh] (xph) {$\hat \vx'$};
        
        \draw[->] (x) -- node[left] {$\rvf$} (y);
        \draw[->] (xp) -- node[right] {$\rvf$} (yp);
        \draw[->] (x) -- node[above] {$\mR(g)$} (xp);
        
        \draw[->] (y) -- node[left] {$\phi$} (xh);
        \draw[->] (yp) -- node[right] {$\phi$} (xph);
        \draw[->] (xh) -- node[below] {$\mR'(g)$} (xph);
        \end{tikzpicture}
        \caption{
        Commutative diagram showing data and model for EbC. 
        }
        \label{fig:summary}
        \vspace{-1em}
    \end{wrapfigure}
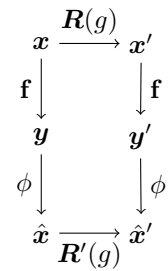
    
In many real-world inference problems, the relationship between observations is governed by structured transformations.
The same sample may be observed before and after an ``action'' has been applied.
In computer vision, an object may be observed in the form of an image before and after rotations, translations, or other types of transformations have been applied \citep{denton2017, lorenz2019, gandhi2024}.  
In biology, large-scale single-cell transcriptomic datasets increasingly contain observations of cells before and after perturbations in the form of gene knockouts \citep{heumos2023best} or pharmacological intervention \citep{sadybekov2023computational}.  
In neuroscience, neural activity reflects changing brain states under sensory input or behavioral output \citep{urai2022large}.
In all these cases, the key to understanding the data lies not only in modeling individual observations but also in capturing the structured relationships between them.

This motivates the goal of learning \emph{equivariant} embeddings. In this embedding space, actions are represented by linear transformations.
To address this challenge in a theoretically grounded way, we adopt a perspective rooted in nonlinear Independent Component Analysis (ICA) and group theory. Suppose that each observation $\vy \in Y$ arises from a latent representation $\vx \in X$ through an unknown, injective nonlinear mixing function $\rvf$, i.e., $\vy = \rvf(\vx)$.
Nonlinear ICA aims to invert this process: to learn an encoder $\phi\approx \vf^{-1}$ that recovers the latent structure from the observed data.

The nonlinear ICA problem becomes tractable by additional assumptions about the structure of the data-generating process \citep{hyvarinen2016unsupervised, hyvarinen2017nonlinear, hyvarinen2019nonlinear, Zimmermann2021ContrastiveLI, hyvarinen2023NonlinearIndependent}. Here, we leverage the property that many datasets do not consist of isolated samples but of pairs $(\vy, \vy')$, where $\vy'$ is a transformed version of $\vy$. Group theory provides a formalism to describe these transformations as the elements of a group $G$.

Each group element $g \in G$ can act on a latent $\vx \in X$, yielding a transformed latent $g\vx$.
This operation is known as a \emph{group action}.
In observation space, we introduce the shorthand $\vy' := g \cdot \vy$ to denote the respective relation between $\vy' = \rvf(g\vx)$ and $\vy = \rvf(\vx)$.
In the latent space, these group actions have a particularly simple form: they correspond to linear maps. Formally, a \emph{group representation} is a homomorphism $\mR: G \mapsto GL(X)$ which maps each group element to an invertible matrix acting on the vector space $X$. Thus, while the transformation $g \cdot \vy$ may have complicated non-linear effects in observation space, in a suitable latent space it can be reduced to a linear relationship of the form: 
\begin{equation}
    \vx' = g\vx = \mR(g) \vx.
\end{equation}
Our goal is to infer this relation directly from pairs $(\vy, g \cdot \vy)$ of observable data and learn an encoder $\phi: Y \mapsto X'$ and a representation $\mR'$ of the group such that
\begin{equation}
    \phi(g \cdot \vy) = \mR'(g) \phi(\vy),
\end{equation}
and $\mR': G \mapsto GL(X')$ is a representation of $G$ (potentially different from $\mR$) on the vector space $X'$.
Learning such representations directly from data is difficult. A growing body of work has approached this problem by leveraging pairs $(\vy, g \cdot \vy)$ even when the specific group element $g$ is unknown \citep{gupta2023StructuringRepresentationa, yu2024SelfsupervisedTransformation, koyama2023NeuralFourier}, with varying constraints: CARE \citep{gupta2023StructuringRepresentationa} restricts itself to orthogonal representations on the hypersphere, STL \citep{yu2024SelfsupervisedTransformation} allows nonlinear equivariant relations in latent space, and the neural fourier transform [NFT; \citealp{koyama2023NeuralFourier}] requires to learn a generative model of the data.

Here, we propose \emph{Equivariance by Contrast (EbC)}, to the best of our knowledge the first \emph{encoder-only} method that learns \emph{general linear} group representations from group action observations with a formal identifiability guarantee. In \textsection~\ref{sec:methods}, we present the algorithm which jointly learns the encoder $\phi$ and an implicit group representation via contrastive learning, without generative modeling or group-specific architectural biases. In \textsection~\ref{sec:theory}, we show that EbC recovers the true latent space and the underlying group representation up to a linear transformation. In \textsection\textsection~\ref{sec:experiment-setup}--\ref{sec:analysis}, we evaluate EbC on synthetic and structured vision datasets, including finite product groups $G := (R_m \times \mathbb{Z}_n \times \mathbb{Z}_n)$ and non-abelian groups such as $O(n)$ and $GL(n)$. EbC achieves high-fidelity equivariant embeddings across diverse settings.

\section{Learning group-equivariant representations with contrastive learning}\label{sec:methods}
    Figure~\ref{fig:overview} outlines our approach: Similar to previous work \citep{koyama2023NeuralFourier, miyato2022UnsupervisedLearning, yu2024SelfsupervisedTransformation, gupta2023StructuringRepresentationa}, we assume that data come in the form of batches, which are grouped into $n+1$ pairs undergoing the same action $g$.
    This form of data is common in various scientific fields, for example, in neuroscience and biology.
    Pairs can also be derived from time-series data under the assumption that nearby points are governed by a shared action [``slowness prior''; \citealp{klindt2021towards}].
    \begin{figure}[tp]
        \centering
        \includegraphics[width=\linewidth]{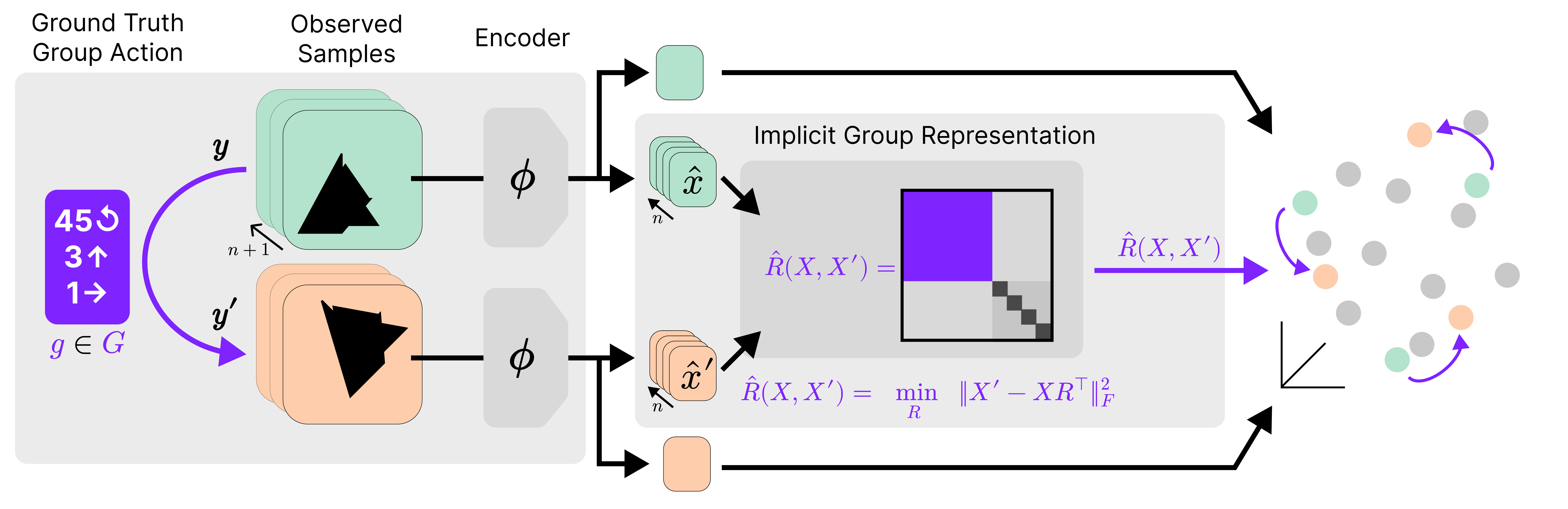}
        \fmtcaption{Overview of the approach}{%
            Left to right: One batch of observed sample consists of $n+1$ paired samples $\{(\vy_i, \vy_i')\}_{i=0}^{n+1}$ where $\vy_i'$ are related to $\vy_i$ via an unkown group action $g$ such that $\vy_i' = g \cdot \vy_i$.
            An encoder $\phi$ maps these observations into latent space $\{(\hat \vx_i, \hat \vx_i')\}_{i=0}^{n+1}$, where $n$ samples are used to estimate a representation $\hat \mR$ of the group. %
        }
        \label{fig:overview}
        \vspace{-1em}
    \end{figure}
    
    The intuition behind our objective function is depicted in Figure~\ref{fig:riddle}: The model has access to a set of mixed samples related via the group action ($\mY$ and $\mY'$), along with a query sample $\vy$. The objective is to infer the group action from the examples $\mY$ and $\mY'$, apply it to the query, and select the correct answer $\vy'$ among a set of options that include the correct answer alongside negative samples $\vy'' \in S$. In our case, the set $S$ contains negative samples randomly selected from the dataset, which cover all potential mismatches (different content, different group action, etc.) alongside the positive sample.
    We use contrastive learning to encode this objective in the likelihood
    \begin{equation} \label{eq:log-density}
        p_\phi(\vy' \mid \vy, \mY, \mY', S)
        = \frac{\exp\big(- \| \vu_\phi(\vy, \mY, \mY') - \phi(\vy') \|^2\big)}
               {\sum_{\vy'' \in S} \exp\big( -\| \vu_\phi(\vy, \mY, \mY') - \phi(\vy'') \|^2 \big)}.
    \end{equation} 
    The shorthand $\vu_\phi$ denotes the operation of inferring the linear representation of the group element, $\hat \mR(\phi(\mY), \phi(\mY'))$, and then applying it to the feature vector of the reference sample $\phi(\vy)$:
    \begin{align}
        \vu_\phi(\vy, \mY, \mY') &= \hat \mR(\phi(\mY), \phi(\mY')) \phi(\vy) \\
        \hat \mR(\mX, \mX') &= \min_{\mR \in \mathrm{GL}(d)} \| \mX' - \mX \mR^\top \|_F^2. \label{eq:group-lstsq}
    \end{align}
    To find the optimal feature encoder $\phi$, we optimize the likelihood across all pairs of samples and uniformly sampled negative examples,
    \begin{equation}
        \min_{\phi} \mathcal{L}[\phi] = - \mathbb{E}_{\vy, \vy', \mY, \mY', S}\left[
            \log p_\phi(\vy' \mid \vy, \mY, \mY', S)
            \right],
    \end{equation}
    which is related to the InfoNCE loss~\cite{oord2018representation} with the additional structure required for group learning.

\paragraph{Separating content and style}
    In many cases, we want to produce embedding spaces in which we can separate \emph{what} is transformed from \emph{how} it is transformed. Intuitively, we consider these aspects of the latent representation to encode \emph{content} and \emph{style}.
    The content is the part of the representation that is expected to be invariant w.r.t. the group action $g$, whereas the style is expected to be equivariant w.r.t. the group action.
    In practice, it is therefore useful to split the vector space induced by $\phi$ into an equivariant and invariant part.
    Algorithmically, this is done by imposing additional structure on the matrix in Eq.~(5), constraining the minimization across matrices of the form $\mathrm{diag}(GL_n, I_m) \subset GL(m+n)$. The representation we learn then has the form
    \begin{equation}
        \hat \mR_{n+m}(\mX, \mX')' = \begin{pmatrix} 
                \hat \mR_n(\mX, \mX') & \mathbf{0}_{n \times m} \\
                \mathbf{0}_{m \times n} & \mI_m
            \end{pmatrix}.
            \label{eq:invariant-equivariant}
    \end{equation}
    This results in an encoder $\phi$ with a $n$-dimensional equivariant subspace, and a $m$-dimensional invariant subspace. We discuss the theoretical properties of this parametrization below. Note the conceptual similarity to subspace contrastive learning discussed in \citep{schneider2025time}. The difference here is that we do not use multiple instances of the contrastive loss but a single contrastive loss that directly learns a structured feature space.

\section{Equivariance by Contrast is identifiable}\label{sec:theory}

    By formalizing assumptions about the data-generating process, we can derive identifiability guarantees for the algorithm described in the previous section.
    
    \paragraph{Dataset.}
        We define the dataset in terms of the underlying data-generating process: Let $\vx \in V \subseteq \R^d$ be a vector describing the ground truth latent components, let $g$ be an element of a group $G$, and let $\rvf: V \rightarrow \R^D$ be an injective map.
        Assume that for each group element $g$, we have at least $M$ pairs of samples $(\vy_j, \vy'_j)$  with
        \begin{align}\label{eq:data-generating-process}
            \vy_i = \rvf(\vx_i), \qquad
            \vy'_i = \rvf(\mR(g) \vx_i) \qquad
            i = 1, \dots, M
        \end{align}
        where $\mR: G \rightarrow \mathrm{GL}(d, \R)$ is a representation of $G$ on $V$.
        If $\mR$ is structured accordingly (Eq.~\ref{eq:invariant-equivariant}), $\vx$ can be decomposed into an equivariant \emph{style} and an invariant \emph{content}, which is denoted as $\vc$ further below in the experimental sections.

    \begin{figure}[tp]
        \centering
        \includegraphics[width=.9\linewidth]{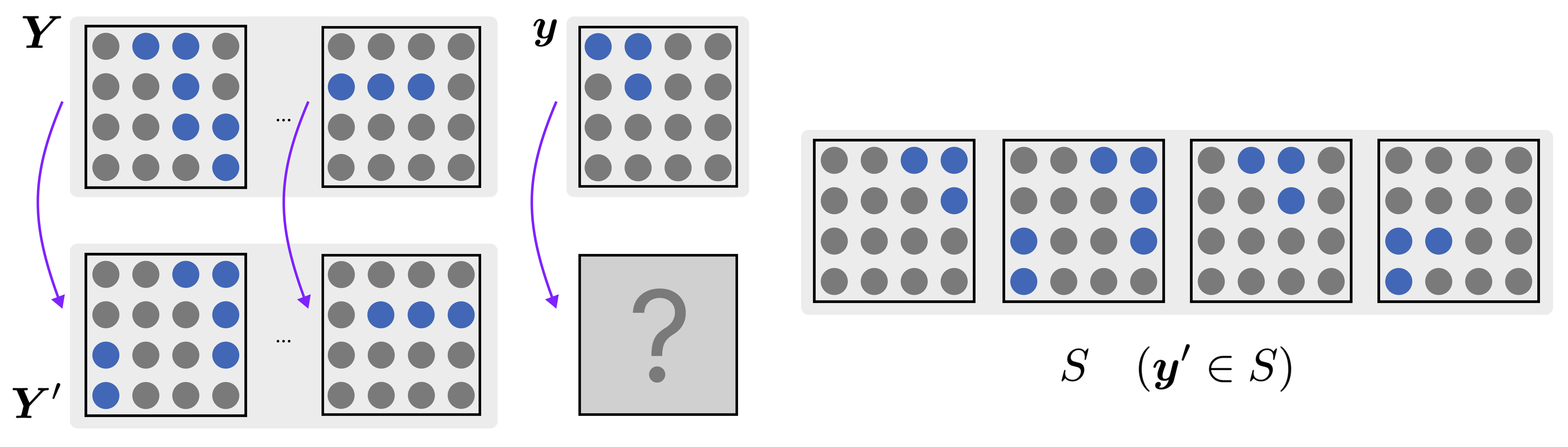}
        \fmtcaption{Intuitive explanation of the training objective}{Given pairs of samples $(\mY, \mY')$ connected by an action $g$, infer the most likely action, and apply it to a novel sample $\vy$ with different content. The training objective is contrastive, and aims to select the correct matching sample $\vy'$ from a selection of positive/negative samples $S$.}
        \label{fig:riddle}
        \vspace{-1em}
    \end{figure}

    \paragraph{Implicit group representations.}
        We model the group representation using the non-parametric approach outlined in Eq.~\ref{eq:group-lstsq}.
        Assume that for each group element $g \in G$, we are given two matrices $\mX, \mX' \in \R^{M \times d}$, $M > d$, where the row vectors $(\vx_i, \vx_i')$ are related via the group element as $\vx_i' = g \vx_i$, $i \in \{1,\dots,M\}$. As a shorthand, we write $\mX' = g\mX$. Then, the expression
        \begin{equation}
            \hat \mR(\mX, \mX') = \min_{\mR \in \mathrm{GL}(d)} \| \mX' - \mX \mR^\top \|_F^2
            \Leftrightarrow
            \hat \mR(\mX, \mX') = (\mX^\top \mX)^{-1} (\mX^\top \mX')
            \label{eq:lstsq}
        \end{equation}
        is a representation of $G$ with $\mR(g) = \hat \mR(\mX, g\mX)$ for each $g \in G$.
        In practice, we do not have access to ($\mX, \mX'$) directly, but to a nonlinear projection of these points via the mixing function $\rvf$, denoted as $(\rvf(\mX), \rvf(\mX')) = (\mY, \mY')$. We map these mixed samples to a feature space using a learnable encoder $\phi: \R^D \rightarrow \R^d$ and insert the resulting matrices into Eq.~\ref{eq:lstsq}.
        Our goal is to optimize $\phi$ such that $\hat \mR(\phi(\mY), \phi(\mY'))$ becomes a representation of $G$.
        An advantage of this approach is that both the feature space and the group representation are fully defined via the feature encoder $\phi$.

\paragraph{Equivariance by Contrast learns group representations.}

    Our theory builds on the canonical discriminative form discussed by  \citet{roeder2021linear}, which requires conditions on the diversity of the dataset to be fulfilled. Specifically, the latents and feature spaces of $\phi$ need to ``sufficiently vary'' across the points in $S$. In addition, the vectors of differences $\phi(\vy_A) - \phi(\vy_B)$ with $\vy_A, \vy_B$ columns of $\mS$ need to span a $d$-dimensional feature space. The conditions are discussed in detail in Appendix~\ref{app:A-proof-theorem-1}.
    This gives the following main results for our model:
    \begin{theorem}[Identitiability of the Group Representation; informal] \label{theorem-1}
        Assume that $p_{\phi} = p_{\rvf^{-1}}$, and let the dataset satisfy the diversity conditions mentioned above. Define $\rvh := \phi \circ \rvf$. Then, for all points $\vx$ in the support of the dataset:
        \begin{enumerate}[%
            label=(\alph*),noitemsep, topsep=0pt, parsep=0pt, partopsep=0pt
        ]
            \item We recover the original vector space $\rvh(\vx) = \mL \vx$, up to an ambiguity $\mL \in \mathrm{GL}(d)$.
            \item We recover a representation of the group, $\hat \mR(\rvh(\mX), \rvh(g\mX)) = \mL \mR(g) \mL^{-1}$.
        \end{enumerate}
        \emph{Proof sketch.}
            With a minor modification, Eq.~\ref{eq:log-density} follows the canonical discriminative form for which \citet{roeder2021linear} proved linear identifiability. Due to the use of the more flexible Euclidean loss, we instead obtain affine identifiability for $\rvh(\vx) = \mL \vx + \vb$. The second condition requires $\hat \mR \phi(\vx) = \mL \mR \vx$ and implies $\vb = 0$. From these two conditions and the specific definition of our model, we can derive the results (a) and (b).
            The full proof is given in Appendix~\ref{app:A-proof-theorem-1}.
    \end{theorem}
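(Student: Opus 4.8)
\emph{Proof proposal.} The plan is to rewrite the contrastive objective as a (slightly generalized) instance of the canonical discriminative model of \citet{roeder2021linear}, apply their identifiability theorem to recover $\rvh$ up to an affine map, and then use the explicit least-squares structure of the predictor $\vu_\phi$ together with the equivariance of the data to collapse the affine ambiguity to the linear ambiguity in (a) and to read off the conjugated representation in (b).

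\textbf{Step 1: canonical form and affine identifiability.} Expanding the squared distance in Eq.~\ref{eq:log-density}, the term $\|\vu_\phi(\vy,\mY,\mY')\|^2$ does not depend on $\vy'$ and cancels between numerator and denominator, leaving
\[
p_\phi(\vy'\mid\vy,\mY,\mY',S)=\frac{\exp\!\big(2\,\vu_\phi(\vy,\mY,\mY')^\top\phi(\vy')-\|\phi(\vy')\|^2\big)}{\sum_{\vy''\in S}\exp\!\big(2\,\vu_\phi(\vy,\mY,\mY')^\top\phi(\vy'')-\|\phi(\vy'')\|^2\big)}.
\]
Up to the extra additive term $-\|\phi(\vy')\|^2$ (the ``minor modification''), this is the canonical discriminative form with context embedding $2\vu_\phi$ and label embedding $\phi(\vy')$. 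Comparing the learned model with the ground-truth model, whose encoder is $\phi_0:=\rvf^{-1}$, the hypothesis $p_\phi=p_{\rvf^{-1}}$ together with the diversity conditions of Appendix~\ref{app:A-proof-theorem-1} (sufficient variation of the embeddings and of $\vu_\phi$ across $S$, and that the differences $\phi(\vy_A)-\phi(\vy_B)$ span $\R^d$) lets me run the argument of \citet{roeder2021linear}; the quadratic term weakens linear to affine identifiability, yielding $\phi=\mL\phi_0+\vb$ for some $\mL\in\mathrm{GL}(d)$ and $\vb\in\R^d$, i.e.\ $\rvh(\vx)=\mL\vx+\vb$. (Fully exploiting the quadratic term actually constrains $\mL$ to be orthogonal; stating only the $\mathrm{GL}(d)$ ambiguity is a safe relaxation.) One subtlety: the diversity conditions are assumptions on the data-generating process, so it is cleanest to phrase them for $\rvf^{-1}$ and transfer them to $\phi$ a posteriori, since they are invariant under invertible affine maps.

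\textbf{Step 2: collapsing affine to linear.} On the ground-truth model the least-squares problem of Eq.~\ref{eq:lstsq} is solved exactly by $\mR(g)$ (the ``implicit group representations'' paragraph), so $\vu_{\phi_0}(\vy,\mY,\mY')=\mR(g)\vx=\phi_0(\vy')$. Substituting the affine relation into the definition of $\vu_\phi$ and writing latents as rows gives $\phi(\mY)=\mX\mL^\top+\vone\vb^\top$ and $\phi(\mY')=\mX\mR(g)^\top\mL^\top+\vone\vb^\top$; for generic $\mX$ with $M>d$ the rank-one term $\vone\vb^\top$ must be fitted separately, so $\hat\mR(\phi(\mY),\phi(\mY'))$ equals $\mL\mR(g)\mL^{-1}$ precisely when $\mL^{-1}\vb$ is fixed by $\mR(g)$, and otherwise deviates from any such conjugate. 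Feeding this back into the softmax-matching identity and comparing coefficients in the spanning family $\{\phi_0(\vy'')\}$ forces $\mL^{-1}\vb\in\bigcap_{g\in G}\mathrm{Fix}(\mR(g))$. Under the non-degeneracy included in the diversity conditions — no nonzero vector is fixed by the whole group, i.e.\ the equivariant block carries no trivial subrepresentation — this intersection is $\{\vzero\}$, hence $\vb=\vzero$ and $\rvh(\vx)=\mL\vx$, which is (a). I expect this to be the main obstacle: the affine shift does not pass cleanly through the matrix inverse inside $\hat\mR$, so one must argue carefully that no nonzero $\vb$ lets the least-squares minimizer reproduce the exact predictor across the whole diverse family of contexts; moreover, on any invariant ``content'' block the bias is genuinely unidentifiable, so (a) must be read modulo a harmless shift of that block.

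\textbf{Step 3: the conjugated representation.} With $\vb=\vzero$ we have $\rvh(\mX)=\mX\mL^\top$ and $\rvh(g\mX)=\mX\mR(g)^\top\mL^\top$; plugging these into the closed form of Eq.~\ref{eq:lstsq}, the Gram factors $\mX^\top\mX$ cancel and $\hat\mR(\rvh(\mX),\rvh(g\mX))=\mL\mR(g)\mL^{-1}$, which is (b). This is automatically a representation of $G$ (conjugation preserves homomorphisms), and it is consistent with (a): $\vu_\phi=\mL\mR(g)\mL^{-1}\mL\vx=\mL\mR(g)\vx=\rvh(\mR(g)\vx)=\phi(\vy')$, so the predictor is exact in the learned latent space as well.
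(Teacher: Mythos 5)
Your proposal follows the same overall route as the paper's Appendix~\ref{app:A-proof-theorem-1}: expand the squared Euclidean distance to obtain the generalized canonical discriminative form (your extra term $-\|\phi(\vy')\|^2$ is exactly the paper's scalar potential $\beta$), invoke the \citet{roeder2021linear}-style argument under the diversity conditions to get affine identifiability $\rvh(\vx)=\mL\vx+\vb$, use the group structure to force $\vb=\vzero$, and finally read off the conjugation $\hat\mR(\mX\mL^\top,\mX\mR(g)^\top\mL^\top)=\mL\mR(g)\mL^{-1}$ from the normal equations. The one place where you genuinely diverge is the treatment of the bias inside the least-squares estimator. The paper's Lemma~\ref{lemma:implicit} asserts that an affine map conjugates $\hat\mR$ with the bias cancelling outright, and then eliminates the offsets via the identity $\vc=\mB\mR(g)\vd$ for all $g$ (together with $\mA=\mB$); you instead observe that the rank-one term $\mathbf{1}_m\vb^\top$ does \emph{not} pass cleanly through $\arg\min_{\mL}\|\mX'-\mX\mL^\top\|_F^2$ (since $\rvf(\mX)\mL^\top$ carries $\mathbf{1}_m\vb^\top\mL^\top$ rather than $\mathbf{1}_m\vb^\top$) and derive $\mL^{-1}\vb\in\bigcap_{g}\mathrm{Fix}(\mR(g))$ directly. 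Both arguments bottom out in the same non-degeneracy requirement — no nonzero vector fixed by every $\mR(g)$ — which you state explicitly whereas the paper asserts it tersely ("only the trivial solution"), and your remark that the bias on a purely invariant content block is genuinely unidentifiable is a correct refinement the paper leaves implicit. Two caveats: your Step~2 is, as you admit, only sketched at the decisive point (you should actually write out the perturbed normal equations to show that no nonzero $\vb$ outside the common fixed subspace can reproduce the exact predictor across a spanning family of contexts), and the parenthetical claim that the quadratic term forces $\mL$ to be orthogonal does not follow here, because $-\|\phi(\vy')\|^2$ is absorbed into the free potential $\beta$ and imposes no isometry constraint; it is harmless only because you immediately relax it.
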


    \begin{corollary}[Equivariance]\label{corollary-1}
    We have $\vx \in V$, $\mR(g)$ is a representation of the group in $V$; we have $\rvh: V \rightarrow W$ where we define $\rvh := \phi \circ \rvf$, and $\mR'(g) = \hat \mR(\rvh(\mX), \rvh(g\mX))$ is the representation on $W$. Then, we have
    \begin{equation}
        \rvh(g \vx) = g \rvh(\vx)
    \end{equation}
    \begin{proof}
        The result follows from Theorem\ref{theorem-1}. For all $\vx$ we have $\rvh(g \vx) = g \rvh(\vx)$. We insert the representation of $g$ on $V$ on the LHS, and on $W$ on the RHS, to obtain $\rvh(\mR(g) \vx) = \mR'(g) \rvh(\vx)$. We insert Thm.~\ref{theorem-1}a to obtain $\mL\mR(g) \vx = \mR'(g) \mL \vx$. From here we obtain $\mL\mR(g)\mL^{-1} \vx = \mR'(g) \vx$ which we showed in Thm.~\ref{theorem-1}b.
    \end{proof}
    \end{corollary}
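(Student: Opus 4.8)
The plan is to recognize that Corollary~\ref{corollary-1} is essentially a re-expression of Theorem~\ref{theorem-1} once the symbolic group action $g$ is replaced by the concrete linear maps it denotes on each space. The first step is therefore purely notational: on $V$ the action of $g$ is $\mR(g)$, so $g\vx = \mR(g)\vx$; on $W$ it is $\mR'(g)$, so $g\rvh(\vx) = \mR'(g)\rvh(\vx)$. Substituting these into the target $\rvh(g\vx) = g\rvh(\vx)$ turns the corollary into the explicit pointwise claim $\rvh(\mR(g)\vx) = \mR'(g)\rvh(\vx)$, which is the statement I actually need to verify for every $\vx$ in the support.

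Next I would invoke Theorem~\ref{theorem-1}(a), which supplies the linear form $\rvh(\vx) = \mL\vx$. Applying it to the right-hand side gives $\mR'(g)\mL\vx$, and applying it to the argument of the left-hand side gives $\mL\mR(g)\vx$. Equating the two and cancelling the arbitrary $\vx$ reduces everything to the operator identity $\mL\mR(g) = \mR'(g)\mL$, equivalently $\mR'(g) = \mL\mR(g)\mL^{-1}$ — which is precisely Theorem~\ref{theorem-1}(b). Thus the chain closes and no independent computation is required; the corollary is a direct composition of the two parts of the theorem.

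The hard part — indeed the only step that is more than bookkeeping — is justifying the use of the linear form at the transformed point $\mR(g)\vx$ rather than merely at $\vx$. Theorem~\ref{theorem-1}(a) guarantees $\rvh(\vx)=\mL\vx$ only on the support of the dataset, so I must verify that $\mR(g)\vx$ lies in that support whenever $\vx$ does. This follows from the data-generating process in Eq.~\ref{eq:data-generating-process}: each observed pair $(\vx_i, \vx_i')$ contains both $\vx_i$ and its transformed partner $\mR(g)\vx_i$, so the images $\mR(g)\vx$ are themselves sampled and hence belong to the support. Securing this membership is what makes the substitution $\rvh(\mR(g)\vx) = \mL\mR(g)\vx$ legitimate; with it in place, the two parts of Theorem~\ref{theorem-1} combine to yield the stated equivariance.
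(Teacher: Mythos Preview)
Your proposal is correct and follows essentially the same approach as the paper's own proof: unfold the abstract actions into $\mR(g)$ and $\mR'(g)$, invoke Theorem~\ref{theorem-1}(a) to linearize $\rvh$, and reduce to the conjugacy relation supplied by Theorem~\ref{theorem-1}(b). Your version is in fact slightly more careful, since you explicitly justify that $\mR(g)\vx$ lies in the support so that Theorem~\ref{theorem-1}(a) may be applied there, a point the paper leaves implicit.
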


\section{Experiment Setup}\label{sec:experiment-setup}

    We validate our proposed model on a set of diverse datasets with different underlying groups.
    \paragraph{Synthetic Group Datasets}
        We first consider three types of synthetic datasets following the data-generating process in Eq.~\ref{eq:data-generating-process} matching the assumptions required for Theorem~\ref{theorem-1}. 
        Group elements are taken from a subgroup of special orthogonal group $SO(n)$, the orthogonal group $O(n)$ and the general linear group $GL(n)$.
        We start by sampling random group elements $g \sim p_G(g)$ from $G \in \{SO(n), O(n), GL(n)\}$. We sample at least $m$ (with $m>d$) random vectors $\vx\in \mathbb{S}^n$ from the unit sphere and sample a content component $\vc\in C$ from a finite set of vectors $C \subset S^d$. From $(\mR(g), \{\vx_i\}^m, \vc)$ we generate $\mY=\{\vy_i, \vy'_i\}^m$, where $\vy_i=\rvf(\vx_i, \vc)$ and $\vy'_i=\rvf(\mR(g)\vx_i, \vc)$. 
        We repeat this sampling procedure until we reach the desired dataset size. 
        The nonlinear injective mixing function $\rvf$ is parameterized by a random 3-layer MLP \citep{hyvarinen2016unsupervised} followed by a random linear map to a 50-dimensional space. 
        For the full dataset we sample a maximum of 1000 matrices $\mR(g)$ and a total of 1M pairs $(\vy, \vy')$. Unless stated otherwise, we choose $n=3$, $d=3$, $|C|=100$. In Appendix~\ref{app:C-n_larger_three} we show additional results for $n \in {3,5,7,9}$ and in Appendix~\ref{app:C-data_efficiency} we report results on data efficiency.
        According to the assumptions of Theorem~\ref{theorem-1}, the relationship $\mR(g)\vx_i$ in the data-generating process does not include noise. But we show a variation of this experiment In Appendix~\ref{app:C-noise} that includes noise in the application of the group action.
    
    \paragraph{Infinite dSprites}
        We leverage infinite dSprites \citep[idSprites]{dziadzio2023infinite}, an extension of dSprites~\citep{dsprites17} for validation on more diverse visual data. We subsample all images to a resolution of $64 \times 64$
        The dataset allows for rich variations in the content and ensures that the resulting objects do not have any symmetries. By default, we use the same number of varying factors as in dSprites, namely 3 random shapes of 6 different sizes. For our purposes, we consider the combination of shape and size to be the content. Each of these objects may then be oriented in 40 different ways or have one of $32\times32$ x or y position translations. We consider the orientations and translations to be our groups of interest and sample the data such that we obtain closed groups. By default, we sample the dataset such that any paired sample $(\vy, \vy^\prime)$ undergoes a combination of these three types of transformations. 

    \paragraph{Training Protocol}
        We use a three layer MLP with 512 hidden units for $\phi$. 
        We fit the implicit representation $\hat \mR(\mY, \mY')$ using the \texttt{gels} least squares solver in PyTorch. By default, we use 84 sample pairs for idSprites and 12 for the synthetic data in $\mY$. Each batch has 1024 positive and 16k negative samples. We train the model for 20k with Adam \citep{kingma2014adam} with learning rate $10^{-3}$. We perform 80/10/10 train/valid/test splits. Metrics are computed out-of-sample, by fitting on one part of the valid split, and reporting on the other.
        Unless stated otherwise, standard deviations and confidence intervals are reported across three dataset and three model seeds. More details in Appendix~\ref{app:B-experiments}.
        
    \paragraph{Baselines}
        We leverage standard InfoNCE~\citep{oord2018representation} training as a baseline (amounting to setting $\mR = \mI$), similar to SimCLR~\citep{chen2020simple}. We also consider dynamics contrastive learning \citep{laiz2025selfsupervised} using a linear dynamical system (LDS) or a switching linear dynamical system (SLDS). We use the same training protocol for the baselines as we do for our own EbC model (same encoder, batch size, training iterations, optimizer, and data loader).

    \paragraph{Metrics}
        Our metrics aim to quantify the representation of the content, the quality of the group representation, and the linear identifiability of the latent space.
        To measure the quality of the representation of the content we fit a Logistic Regression $(\mW, \vb)$ and predict the class label $j \in K$ of content $\vc_j$:
        \begin{align}
            \mathrm{Acc}(C, K) := \text{TopK-Acc}(\sigma(\mW \rvh(\vx) + \vb) \mid \mY') \qquad
            \sigma(\cdot) := \text{softmax}(\cdot)
        \end{align}
        Likewise, to quantify the linear identifiability of the embedding space (Theorem~\ref{theorem-1}a), we measure:
        \begin{equation}
            R^2(x) := \max_{\mL \in GL(n)} R^2_\vx(\mL \hat{\rvh}(\vx), \vx).
        \end{equation}
        To measure the quality of the group representation we introduce two different metrics. First we introduce an $R^2$ based metric which we derive from Theorem \ref{theorem-1}b: 
            \begin{equation}
                R^2(G) := R^2_\vx[\mL^*\hat\mR\rvh(\vx), \mR\vx], \quad
                \mL^* = \arg\max_\mL \| \mX - h(\mX) \mL^\top \|_F^2
            \end{equation}
        To measure the group representation without access to the ground truth latent representation $\vx$, we introduce an Accuracy over the KNN lookup of the transformed input vector $\vy'$:
            \begin{align}
                \mathrm{Acc}(G, K) := \text{TopK-Acc}(\text{kNN}(\mY)|\mY') \qquad
                \mathrm{kNN}(\vy) := \arg\max_{\vj \in \mJ} || \phi(\vj)-\hat\mR \phi(\vy)||^2
            \end{align}
        To compute this metric, we sample 20k random pairs $\vy, \vy'$ to solve a 20k-class classification problem. We compute the accuracy cross-validation across every of the 20k samples. 

        The metrics defined above can be split into two categories: $R^2(x)$ and $R^2(G)$ require access to the ground truth latents and hence can only be computed on synthetic toy datasets for which we have full knowledge about the data-generating process. In practice, this is not the case. Therefore, we defined a second group of metrics, including $\mathrm{Acc}(G, K)$ and $\mathrm{Acc}(C)$, that are applicable in the case where there is no access to the ground truth latents. We refer to appendix~\ref{app:C-idsprite} for a study on the relationship of $R^2(G)$ and $\mathrm{Acc}(G, K)$, which suggests we can use $\mathrm{Acc}(G, K)$ as a proxy of $R^2(G)$.

\section{Empirical Results}\label{sec:results}

    \newcommand{\tnum}[2]{{#1}\!{\fontsize{5}{5} \sffamily \textcolor{gray}{±#2}}}
    \begin{table}[tp]
    \begin{center}
        \newcommand{\midrulewithbreaks}{\cmidrule(lr){2-4} \cmidrule(lr){5-7} \cmidrule(lr){8-10} \cmidrule(lr){11-12}} 
        \newcommand{\fullmidrulewithbreaks}{\cmidrule{1-1} \cmidrule(lr){2-4} \cmidrule(lr){5-7} \cmidrule(lr){8-10} \cmidrule(lr){11-12}} %
        \centering
        \footnotesize
        \setlength{\tabcolsep}{2pt}
        \captionof{table}{Overview. We vary the group $G$ used for the data-generating process, and consider a non-linear mixing through a neural network (``non-linear'') as well as the infinite dSprites dataset. We report the $R^2$ on the equivariant part of the embedding, and the Accuracy (in \%) on the invariant part of the embedding for identifying the content information.}
        \resizebox{\textwidth}{!}{
        \begin{tabular}{rccccccccccc}
            \toprule
            \emph{Group ($G$)} & \multicolumn{3}{c}{$SO_3$} & \multicolumn{3}{c}{$O_3$} & \multicolumn{3}{c}{$\mathrm{GL}_3$} & \multicolumn{2}{c}{$R_m \times \mathbb{Z}_n \times \mathbb{Z}_n$} \\
            \midrulewithbreaks
            \emph{Obs. ($\rvf$)} & \multicolumn{3}{c}{non-linear} & \multicolumn{3}{c}{non-linear} & \multicolumn{3}{c}{non-linear} & \multicolumn{2}{c}{indSprites} \\
            \midrulewithbreaks
            \emph{Metric} &  $R^2(x)$ & $R^2(G)$ & Acc(C) & $R^2(x)$ & $R^2(G)$ & Acc(C) & $R^2(x)$ & $R^2(G)$ & Acc(C) & Acc(G, 5) & Acc(C)  \\
            \fullmidrulewithbreaks
            InfoNCE & \tnum{0.0}{0.02} & \tnum{0.0}{0.00} & \tnum{98.9}{0.83} & \tnum{0.0}{0.01} & \tnum{0.0}{0.01} & \tnum{99.1}{0.55} & \tnum{0.1}{0.15} & \tnum{0.0}{0.00} & \tnum{98.5}{0.56} & \tnum{0.36}{0.08} & \tnum{99.97}{0.03} \\
            +LDS & \tnum{0.0}{0.03} & \tnum{0.0}{0.00} & \tnum{99.0}{0.51} & \tnum{0.0}{0.11} & \tnum{0.0}{0.00} & \tnum{99.0}{0.65} & \tnum{0.1}{0.06} & \tnum{0.0}{0.00} & \tnum{98.4}{0.59} & \tnum{0.31}{0.03} & \tnum{99.96}{0.04} \\
            +SLDS &  \tnum{0.0}{0.01} & \tnum{0.0}{0.00} & \tnum{98.9}{0.79} & \tnum{0.0}{0.03} & \tnum{0.0}{0.01} & \tnum{98.7}{0.96} & \tnum{0.2}{0.24} & \tnum{0.0}{0.00} & \tnum{97.7}{0.93} & \tnum{0.28}{0.02} & \tnum{99.81}{0.27} \\
            \fullmidrulewithbreaks
            EbC (lin.) & \tnum{70.9}{2.60} & \tnum{54.1}{4.63} & \tnum{20.2}{1.94} & \tnum{70.8}{2.62} & \tnum{54.0}{4.66} & \tnum{19.8}{2.14} & \tnum{59.7}{8.10} & \tnum{39.8}{13.66} & \tnum{22.3}{4.63} & -- & -- \\
            EbC & \tnum{99.7}{0.22} & \tnum{99.7}{0.25} & \tnum{99.1}{0.72} & \tnum{99.8}{0.05} & \tnum{99.7}{0.04} & \tnum{99.2}{0.69} & \tnum{99.8}{0.03} & \tnum{99.7}{0.06} & \tnum{98.5}{0.69} & \tnum{99.91}{0.05} & \tnum{74.04}{1.91} \\
            \bottomrule
        \end{tabular}}
        \label{tab:overview}
    \end{center}
    \vspace{-1em}
    \end{table}

    We apply EbC on a variety of group learning settings, summarized in Table~\ref{tab:overview}.
    EbC is effective at recovering group structure both in a setting where we have precise access of the latent space (for verification), and scales to benchmarking datasets used in objective-centric learning.
    Across all baselines, only EbC is able to recover the structure with high fidelity:
    We reach an $R^2$ of $>99\%$ in recovering the ground truth latent space on synthetic data, validating Thm~\ref{theorem-1}a and substantially outperforming a linear baseline (60--70\%). This result verifies that indeed $\rvh(\vx) = \mL \vx$ up to a linear indeterminacy in practice.
    We verify the group structure through an $R^2$ between the projected sample and the ground-truth for synthetic data with perfect recovery of >98\% (Thm~\ref{theorem-1}b). 
    As a verification, we compare to existing contrastive learning baselines which are well established at learning \emph{invariant} representations, i.e., the content. Indeed, all considered baselines are effective at recovering the content information with accuracies typically >$98\%$.
    On synthetic data, EbC matches this performance (>$98$\%); on idSprites we encounter a trade-off and accuracy drops to 75\%.

    \paragraph{EbC learns a representation of ($R_m \times \mathbb{Z}_n \times \mathbb{Z}_n$) from image data.}
        We next evaluate EbC on idSprites.
        An optimal embedding space for the $R_m \times \mathbb{Z}_n \times \mathbb{Z}_n$ is a 3-torus, with the three individual groups represented along the three circular coordinates.
        Fig~\ref{fig:sprites-torus}a depicts this (assumed) underlying structure of the latent space (which is not enforced in the real dataset or during training), along with the reconstructed embedding space of EbC. Note that the dimensions are related to each other; in Fig.~\ref{fig:sprites-torus}b we show the dependency based on the y-, x- coordinate for a fixed angle. Qualitatively, this embedding space was stable also under variations of the content (shape and size of the object).
        
        \begin{figure}[t]
        \centering
        \includegraphics[width=\linewidth]{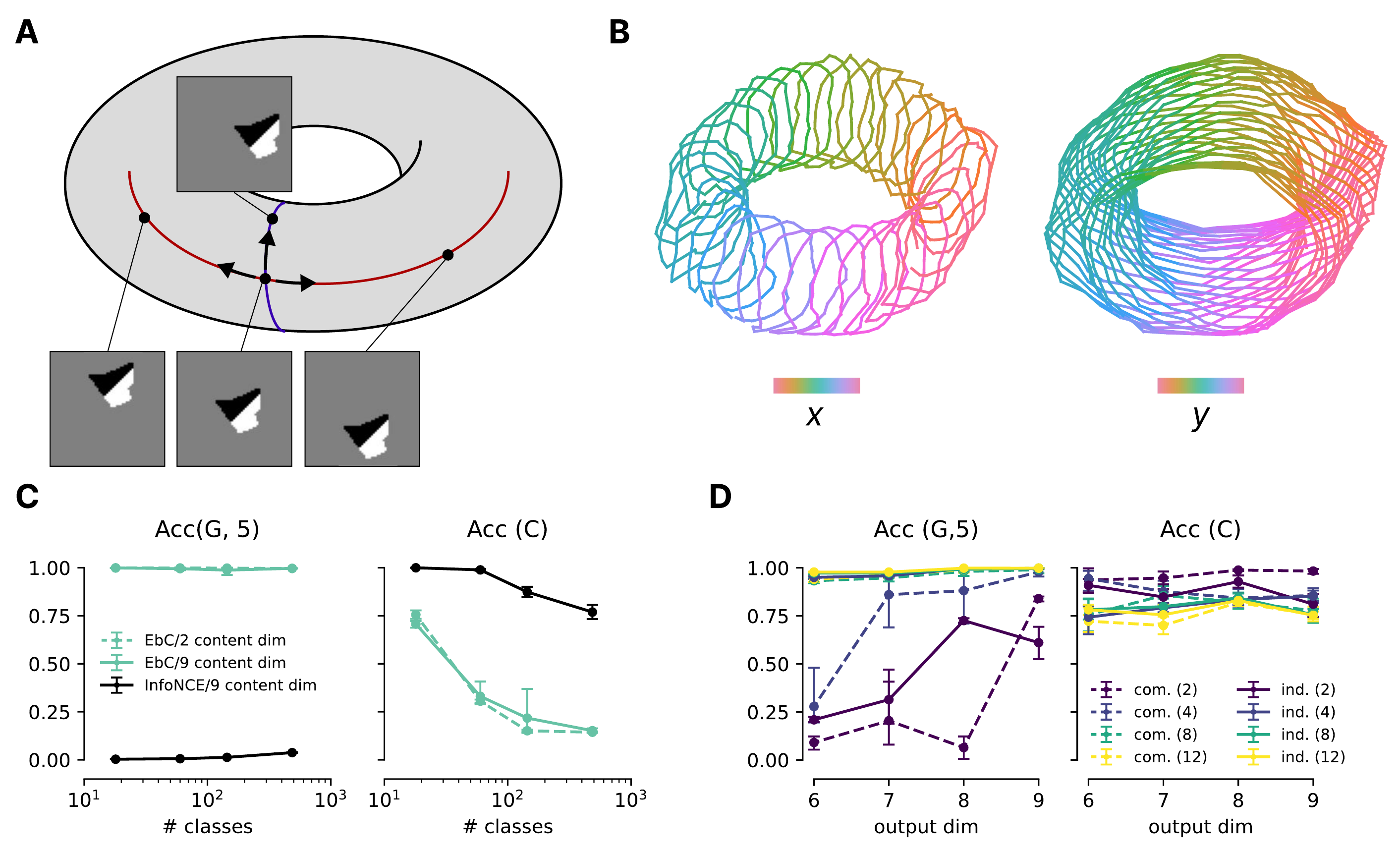}
        \fmtcaptionof{figure}{EbC learns faithful representations of group actions}{%
            A, a possible ground-truth latent space for idSprites. Depicted is $\mathbb{Z}_n \times \mathbb{Z}_n$.
            B, the actual embedding obtained by EbC, plotted for a fixed value of the orientation angle and colored in two views for variations in x- and y- direction.
            C, evaluation of representation quality and content accuracy across an increasing amount of classes.
            D, impact of observing the single groups (ind.) vs. a combination at each step in training (com.) across different output dimensions. Number in brackets is the number of samples in $\mY$ per output dim.}
        \label{fig:sprites-torus}
        \vspace{-1em}
    \end{figure}

    \paragraph{EbC learns faithful representations of content and style under diverse conditions.}

        Next, we perform a fine-grained quantitative evaluation of the embedding space. In particular, we vary the properties of the indSprites dataset by varying the number of options for the content (shape and size) and the number of variations to learn from (translation and rotation) while keeping the overall dataset fixed.
        In Figure~\ref{fig:sprites-torus}c we outline these options, reporting both the classification accuracy of the content, and the quality of the representation using our kNN metric. EbC recovers the group structure with high Top-5 accuracy close to 100\% when performing an 1-over-20k kNN lookup for a range of number of contents. However, we notice that the content classification declines from around 80\% to 20\%. We report qualitative results of the KNN prediction in Appendix~\ref{app:C-idsprite}. %

    \paragraph{EbC is robust to over-parameterization}

        In Figure~\ref{fig:sprites-torus}d, we vary the output dimension of the encoder (divided into 2 dimensions for the content, and 4--7 for the group).
        We observe that the group structure preserves a high predictive kNN above 99\% and a stable content classification performance above 80\%.
        However, this requires a sufficient number of samples for estimating the implicit representation: For $4\times$ the group dimensionality is required when single transitions are observed, and $8\times$ is required in the case of compound actions.
        Note that in practice, it is possible to even inform model hyperparameters on this metric, as it is available without any knowledge of the underlying ground truth structure of the data.

    \paragraph{EbC identifies group representations of $O(n), GL(n),$ and $SO(n)$.}

        Next, we investigate the applicability to more general group structures. For this, we generate additional synthetic data from more complex groups. $SO(n)$ is close to our example on idSprites, which is extended by $O(n)$ and $GL(n, R)$ as a fairly general and challenging example (Figure~\ref{fig:vectors-mixing}). Here we report results for $n=3$, in Appendix~\ref{app:C-n_larger_three} we show additional results for higher dimensions.
        The complexity of the problem is additionally varied by generating increasingly non-linear datasets. This is expected to eventually degrade performance when a fixed-size dataset is used. We vary the number of mixing layers from n=2 to higher dimensions.

        EbC is able to recover both a suitable vector space and an implicit group representation on these datasets with high fidelity.
        Figure~\ref{fig:vectors-mixing}a shows the ground truth latents, and Figure~\ref{fig:vectors-mixing}b shows a qualitative impression how the spherical latent space is recovered after unmixing.
        Quantatively, the R2(x) metric in Fig.~\ref{fig:vectors-mixing}c confirms the quality of this mapping up to n=4 mixing layers before we obtain degradation.
        Likewise, forward prediction through the group representation obtains high R2(G) up to 4 mixing layers, substantially outperforming the linear baseline. A relevant metric in practice is Acc(G) measuring forward prediction through a kNN based metric.
        In comparision to the InfoNCE baseline, we perform comparatively in identifying the class content across all group types (Acc(C)).

        \begin{figure}[t]
            \centering
            \includegraphics[width=\linewidth]{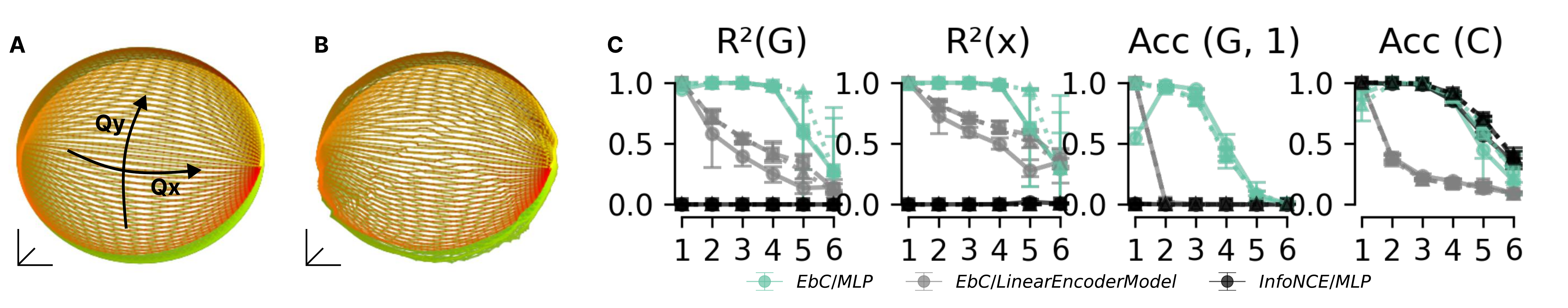}
            \fmtcaption{Verification on simulated data}{%
                A, traversals through the ground-truth latent space of our synthetic datasets. We manually constructed two orthogonal transformations $\mQ_x, \mQ_y$ perpendicular to each other to obtain a sphere.
                B, recovered latent space after non-linear demixing and optimal linear alignment.
                C, comparisons of EbC (green) against a linear baseline (gray) and an InfoNCE baseline (black) across an increasing number of mixing layers.
            }
            \label{fig:vectors-mixing}
            \vspace{-1.5em}
        \end{figure}

\section{Further Analysis, Ablations, and Modeling Choices}\label{sec:analysis}

    Following experimental validation of our model and applicability to image data, we are interested in analyzing the empirical behavior of the loss function in more detail. In practice, an important hyperparameter is the parameterization of the embedding space, as well as the setup of the loss function.
    Further validation and ablation experiments can also be found in Appendix~\ref{app:C-more-results}.

    \paragraph{Identification of the correct latent dimensionality}
        In practice, the true dimensionality of the latent vector space is not known, or even ambiguous.
        Algorithms for learning group structure should be ideally (1) robust to model misspecification and (2) provide a clear hyperparameter validation protocol for choosing the correct dimensionality.
        In Fig.~\ref{fig:vectors-misspecification}, we report the properties of EbC under model misspecification.
        Both for the content and groups, we consider a 3d ground truth latent space, resulting in a total of six latent factors.
        In Figure~\ref{fig:vectors-misspecification}, second row, we show performance variations as we specify the wrong content dimension. Except for a degenerative case for d=1, we obtain close to perfect recovery of both group structure and content.

        In the first row of Figure~\ref{fig:vectors-misspecification} we misspecify  the group dimensionality, which is more critical.
        We see a clear peak in 1-NN lookup performance (Acc(G)) as well as saturation for R2(x)/R2(G) prediction performance for d>3.
        Very crucially, the practically \emph{observable} metric for hyperparameter selection, Acc(G), shows a clear peak at the correct dimensionality (d=3), making it a prime target for model selection (which we indeed did in the experiments showed so far, cf. the discussion in Appendix~\ref{app:B-experiments}).
        As before, with the correctly selected dimension, we obtain a close-to-perfect performance in both recovery of the latent space and the representation of the group.

        \begin{figure}[tb]
            \centering
            \includegraphics[width=.95\linewidth]{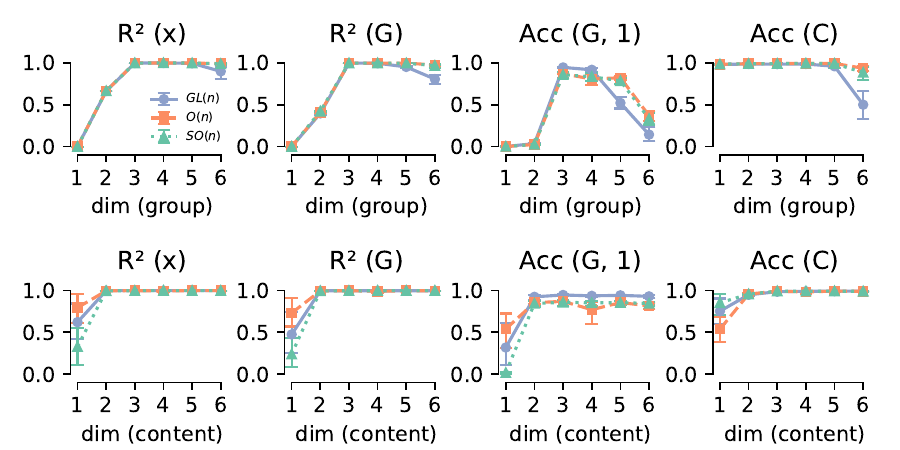}
            \caption{%
                Robustness to model misspecification. In the ground-truth process, we allocate 3 dimensions to coordinates affected by the group action, and 3 dimensions to to content. Note, R$^2$ metrics reported here are not observable on real data, while Acc(G) is observable and Acc(C) is observable if a labeled test set is given.
            }
            \label{fig:vectors-misspecification}
            \vspace{-1em}
        \end{figure}

    \paragraph{Analysis of optimal model parameters and hyperparameter selection}

        With our fixed selection protocol, we finally consider model choices for the implementation of the algorithm outlined in Sec.~\ref{sec:methods}. In particular, the choice of negatives for the set $S$ is relevant for satisfying the diversity conditions required for Theorem~1. Furthermore, we can extend the loss function to consider the property of an inverse element, i.e., that both the action $\mR(\mX, \mX')$ and $\mR(\mX', \mX)$ are valid representations. Note, we do not explicitly force the constraint that the respective matrices should be inverses of each other, making our approach still potentially applicable to structures not fulfilling all group axioms. 

        Table~\ref{fig:vectors-ablations} depicts the different choices for both a symmetric loss and different negative distributions. On simpler groups, SO(n) and O(n), when our input vectors are normalized, the co-domain of the action will still be the sphere, making it less crucial to consider both $\vy$ and $\vy'$ as negative samples. In contrast, as we move to $GL(n)$, it is crucial to use both samples for training to avoid a performance drop. On O(n), GL(n), a symmetric loss additionally improves the performance.

        \begin{table}[btp]
        \centering
        \caption{Model variations across different loss functions, negative samples, and groups.}
        \label{fig:vectors-ablations}
        \footnotesize
        \newcommand{\ablationmidrule}{\cmidrule(r){1-2} \cmidrule(lr){3-4}\cmidrule(lr){5-6}\cmidrule(lr){7-8}}
        \begin{tabular}{cc*{3}{cc}}
            \toprule
            Sym. & $S$
              & \multicolumn{2}{c}{$SO_n$} 
              & \multicolumn{2}{c}{$O_n$} 
              & \multicolumn{2}{c}{$GL_n$} \\
            \cmidrule(lr){3-4}\cmidrule(lr){5-6}\cmidrule(lr){7-8}
             & 
              & Acc(G, 1) &  Acc(G, 5) 
              & Acc(G, 1) &  Acc(G, 5)  
              & Acc(G, 1) &  Acc(G, 5)  \\
            \ablationmidrule
            \xmark & both     & \tnum{56.23}{2.62} & \tnum{77.38}{24.35} & \tnum{86.53}{1.33} & \tnum{89.75}{1.65} & \tnum{93.14}{19.89} & \tnum{99.98}{0.01} \\
            \xmark & $\vy$        & \tnum{28.77}{21.88} & \tnum{83.65}{6.07} & \tnum{85.49}{1.39} & \tnum{53.23}{38.76} & \tnum{99.91}{0.18} & \tnum{99.98}{0.01} \\
            \xmark & $\vy'$ & \tnum{73.44}{4.40} & \tnum{79.52}{18.17} & \tnum{87.08}{1.49} & \tnum{95.91}{1.49} & \tnum{96.83}{9.15} & \tnum{99.98}{0.01} \\
                \ablationmidrule
            \cmark & both & \tnum{94.70}{1.23} & \tnum{86.95}{1.06} & \tnum{85.59}{6.04} & \tnum{99.83}{0.11} & \tnum{99.98}{0.01} & \tnum{99.93}{0.16} \\
            \cmark & $\vy$ & \tnum{58.55}{20.11} & \tnum{85.54}{2.54} & \tnum{85.70}{4.88} & \tnum{87.77}{6.73} & \tnum{99.97}{0.02} & \tnum{99.95}{0.10}  \\
            \cmark & $\vy'$ & \tnum{94.44}{1.06} & \tnum{84.31}{6.99} & \tnum{83.61}{7.88} & \tnum{99.84}{0.09} & \tnum{99.89}{0.25} & \tnum{99.85}{0.35} \\
            \bottomrule
        \end{tabular}
        \end{table}

\section{Limitations}\label{sec:limitations}

    \paragraph{Accuracy trade-off for many classes}
    While EbC is able to convincingly estimate both the latent space and the group actions, we could construct a failure case in idSprites when class dimensions increases to multiple hundreds or more. While we continue to estimate the group representation, the classification performance of the content decreases in these cases, even if the dimensionality of the latent space is matched to our baseline. A possible explanation is that EbC needs to implicitly learn a prototype of each class to estimate the coordinates affected by the group action, which might be harder to embed in the network than a classification objective alone.

    \paragraph{Style vs. content subspaces identifiability}
    Our proposed prior to separate content and style also has limitations. A clear separation can only be expected if the group dimensionality is chosen to be minimal. While we demonstrated that such a minimal dimensionality can be computed on real-world datasets, it comes with the additional computational burden of hyperparameter estimation, which might be prohibitive in practice. Future work might consider improved versions of our content/style subspace prior and provide theoretical guarantees on the optimal separation.
    Within the scope of this work, we showed a practical way to estimate the hyperparameters using the Acc(G) metric in the analysis section.

    \paragraph{Real-world data and baselines}
    Although we are the first to show that identifiable group representation learning from unlabeled observational data is feasible at all, our empirical analysis can be substantially extended. Firstly, on selected previous setups, more baseline comparisons could be conducted which is here limited due to the lack of extensive benchmarks.
    It will be particulary interesting to see how the algorithm scales to to full-scale image datasets like 3DIdent \citep{Zimmermann2021ContrastiveLI}.
    However, we do report additional experiments on real-world data in Appendix~\ref{app:C-rat}.

    \paragraph{Scaling properties, empirical vs. theoretical} Our theoretical result mandates the use of $d+1$ examples for estimation of the embedding space, and currently does not make a claim how convergence is influenced by limited data. In practice we observed that substantially more than $d+1$ samples are required (about 6-8$\times$ gave good empirical results). In this light, our theoretical results can be extended to include bounds on the behavior with limited samples, although challenging and not every common in existing identifiability work. On the other hand, replacing our out-of-the-box least squares estimator by more stable and more adapted algorithms might further close the gap to the theoretical optimum.  We report additional results on data efficiency in Appendix~\ref{app:C-data_efficiency}.

\section{Related Work}

\paragraph{Learning equivariant representations} 

    Early work in equivariant representation learning focused on designing neural network architectures that are explicitly equivariant to a predefined class of transformations \citep{lecun1995CNN, zaheer2017deepsets, cohen2016GroupEquivariant, satorras2021EquivariantGraph, kondor2018GeneralizationEquivariance, finzi2020GeneralizingConvolutional}. These methods were powerful but require expert knowledge to design an inductive bias specific to particular groups. Inspired by the success of invariant self-supervised learning (I-SSL) methods \citep{oord2018representation, chen2020simple, grill2020BootstrapYour, zbontar2021BarlowTwins}, a new paradigm emerged that sought to \emph{learn} equivariant representations directly from data.
    The first wave of SSL-based methods assumed full knowledge of the group actions. Latent representations were learned either via encoder-only frameworks \citep{dangovski2021EquivariantSelfSupervised, devillers2022EquiModEquivariance, park2022LearningSymmetrica, garrido2023SelfsupervisedLearning} or via autoencoder-based frameworks \citep{qi2022LearningGeneralized, jin2024LearningGroup, keurti2023homomorphism}. Equivariance was encouraged either by predicting the parameters of the group action \citep{dangovski2021EquivariantSelfSupervised} or enforced by explicitly modeling the effects of group actions in latent space \citep{devillers2022EquiModEquivariance, park2022LearningSymmetrica, garrido2023SelfsupervisedLearning, qi2022LearningGeneralized, jin2024LearningGroup, keurti2023homomorphism}.
    The most recent advances tackle the more challenging problem of learning equivariance without explicit knowledge of the group actions. Both encoder-only approaches \citep{gupta2023StructuringRepresentationa, yerxa2024ContrastiveEquivariantSelfSupervised, yu2024SelfsupervisedTransformation} and autoencoder-based approaches \citep{miyato2022UnsupervisedLearning, koyama2023NeuralFourier, mitchel2024NeuralIsometries} have been proposed. The central innovation of these methods is that the representation of group actions $\mR(g)$ is learned directly from data pairs $(\vy, g \cdot \vy)$.  
    For example, CARE \citep{gupta2023StructuringRepresentationa} is an encoder-only, contrastive learning based approach with implicit $\mR(g) \in O(n)$. STL \citep{yu2024SelfsupervisedTransformation} also uses an encoder-only approach but enforces non-linear equivariant relationships $\phi(g \cdot \vy) = \mu(\phi(\vy),  \mR(g))$. NFT \citep{koyama2023NeuralFourier} adopts an auto-encoder framework with explicit linear group representations $\mR(g) \in GL(n)$. While these methods mark important progress, they each leave open critical challenges: some are restricted to special classes of group representations, some require generative modeling of the input space, and none provide formal identifiability guarantees of the applied algorithm. While \citet{koyama2023NeuralFourier} are the first to show that the problem itself is identifiable (i.e., there exists a non-trivial G-equivariant map $\phi: Y \mapsto X$ unique up to G-isomorphisms of the embedding space), they do not provide identifiability guarantees for the learning method itself.  

\paragraph{Identifiable nonlinear ICA}
    In nonlinear ICA, observed variables $\vy$ are assumed to be generated from latent variables $\vx$ via an unknown nonlinear "mixing" function $\rvf$. The goal is to recover $\rvf^{-1}$, or equivalently, estimate the latent variables $\vx$ from the observed variables $\vy$. \citet{hyvarinen1999NonlinearIndependent} showed that in contrast to linear ICA, the nonlinear ICA problem is, in general, \emph{un}identifiable.
    Subsequent work has established that identifiability can be recovered by making additional assumptions about the data-generating process \citep{hyvarinen2016unsupervised, hyvarinen2017nonlinear, hyvarinen2019nonlinear}. More recently, this theory has been connected to contrastive learning \citep{Zimmermann2021ContrastiveLI, roeder2021linear}, demonstrating that: (a) the choice of the conditional distribution of positive samples encodes the assumptions about the data-generating process, and (b) optimizing the InfoNCE loss \citep{gutmann2010NoisecontrastiveEstimation, oord2018representation} recovers a solution to the nonlinear ICA problem. The identified representation is unique up to an indeterminacy that is directly determined by the assumed conditional distribution.  
    Building on these advances, \citet{laiz2025selfsupervised} proposed DCL, which identifies latent representations under the assumption of linear or switching linear dynamical systems. Since G-equivariant representations also rely on a linear relationship between $(\vx, \vx')$ or $(\vx_t, \vx_{t+1})$) in sequential data, there exists a natural connection between DCL and the equivariant representation learning methods discussed above. Our work leverages this connection, combining the identifiability guarantees of nonlinear ICA with the goals of equivariant representation learning.  

\section{Conclusion}\label{sec:conclusion}
    We proposed the first identifiable learning algorithm for group representations from unlabeled data without relying on generative models. We demonstrated, theoretically and empirically, that latent spaces of datasets with underlying group structure can be estimated from observational data.
    Crucially, the assumptions of our algorithms match datasets encountered in engineering and science. The supervisory signal merely requires knowledge that ``the same'' action is applied to a collection of samples.
    We confirmed that under these settings and mild overall assumptions on sufficient variety in the dataset, few of such pairs are sufficient to estimate the underlying ground truth latent space in which the group actions are faithfully represented as a linear transformation.
    We see wide applicability of this approach especially in computer vision (e.g., for figure-ground segmentation), robotics (for learning affordances and planning), and biology and biomedicine (for modeling the effects of longitudinal data and perturbation effects), among other fields.

\begin{ack}
    We thank Susanne Keller for in-depth discussions about the model setup and related work.
    We thank Luisa Eck for discussions on the theory.
    We thank the anonymous reviewers and the area chair at NeurIPS 2025 for their constructive feedback to improve our manuscript.
    This work was supported by a Google PhD fellowship to StS.
    This work was supported by the German Research Foundation (DFG): SFB 1233, Robust Vision: Inference Principles and Neural Mechanisms, project number: 276693517 and by Open Philanthropy Foundation funded by the Good Ventures Foundation. MB is a member of the Machine Learning Cluster of Excellence, funded by the Deutsche Forschungsgemeinschaft (DFG, German Research Foundation) under Germany’s Excellence Strategy – EXC number 2064/1 – Project number 390727645.
    This work was supported by the Helmholtz Association's Initiative and Networking Fund on the HAICORE@KIT and HAICORE@FZJ partitions.

\section*{Author contributions} 

    \textbf{TS}:
      Methodology,
      Software,
      Investigation,
      Writing--Editing;
    \textbf{StS}:
      Conceptualization,
      Methodology,
      Formal analysis,
      Writing--Original Draft,
    \textbf{MB}:
      Conceptualization,
      Writing--Editing.
\end{ack}

\bibliographystyle{plainnat}
\bibliography{refs}

\clearpage
\appendix

\part{Supplementary Material}
\parttoc
\clearpage

\section{Proof of Theorem 1}\label{app:A-proof-theorem-1}

In this chapter, we provide a proof for Theorem~\ref{theorem-1} in the main paper.
We assume the following data generating process:
\begin{definition}[Data generating process]
    Let $\vx \in V \subseteq \R^d$ be a vector describing the ground truth latent components, let $g$ be an element of a group $G$, and let $\rvf: V \rightarrow \R^D$ be an injective map.
    Assume that for each group element $g$, we have at least $M$ pairs of samples $(\vy_j, \vy'_j)$  with
    \begin{align}\label{eq:data-generating-process-appendix}
        \vy_i = \rvf(\vx_i), \qquad
        \vy'_i = \rvf(\mR(g) \vx_i) \qquad
        i = 1, \dots, M
    \end{align}
    where $\mR: G \rightarrow \mathrm{GL}(d, \R)$ is a representation of $G$ on $V$.
\end{definition}

If we think of $\vx$ containing an invariant part (the content), the theory by \citet{von2021self} already covers the case for identifying the content component of $\vx$ when considering the group actions as augmentations. We will therefore not explicitly discuss the discovery of the invariant part of the embedding, but focus our following statements on learning equivariant representations.

We extend the canonical discriminative form introduced by \citet{roeder2021linear} to arrive at the form
\begin{equation}
    p_{\vu,\vv, \alpha, \beta}(\vy \mid \vx, S)
    = \frac{\exp\big( \vu(\vx)^\top \vv(\vy)  + \alpha(\vx) + \beta(\vy) \big)}
           {\sum_{\vy' \in S} \exp\big( \vu(\vx)^\top \vv(\vy')  + \alpha(\vx) + \beta(\vy') \big)}.
\end{equation}
The functions $\vu, \vv: \R^d \rightarrow \R^d$, $\alpha, \beta: \R^d \rightarrow \R$ can have shared underlying structure (e.g., are parameterized by the same neural network) and are not necessarily independent. We extend the diversity conditions \citep{roeder2021linear}:
    \begin{definition}[Diversity Conditions, adapted from \citet{roeder2021linear}]\label{assumptions:diversity}
        Assume a dataset of tuples $(\vx, \vy, S)$ and let $\pdata(\vx, \vy, S)$ be a distribution over this (possibly discrete) dataset. We then require the following properties:
        \begin{enumerate}
            \item For any $\vx, S$ we can find at least $N$ pairs of points $(\vy_A, \vy_B)$ with $\pdata(\vx, \vy_A, S) > 0$ and $\pdata(\vx, \vy_B, S) > 0$  such that the finite differences $\vv(\vy^i_A) - \vv(\vy^i_B)$ are linearly independent.
            \item For any $\vy$, we can find at least $N$ pairs of points $\vx_A, \vx_B$ and their sets of negative samples $S_A, S_B$ such that $\vy \in S_A \cap S_B$ is in the negative sets for both examples, and the finite differences $\vu(\vx^i_A) - \vu(\vx^i_B)$ are linearly independent.
        \end{enumerate}
    \end{definition}

    Both assumptions need to be assured through design of the sampling procedure and initialization of the networks. In particular, the first condition requires that sufficient variation is present among negative examples. Similar to \citet{roeder2021linear}, since $\vv$ is a randomly initialized neural network, it is expected that the variability is met as long as variability in the ground truth factors is given.
    The second condition requires that negative samples sufficiently often co-occur with enough variation in the reference examples. In addition, this condition requires $\hat \mR$ to be full-rank, which we ensure by adding at least $d$ samples for estimating a $d \times d$ matrix.

We can now re-state the theorem and discuss the proof strategy. Please note that we relate the embedding space according to the commutative diagram in Figure 1: We consider the relations between samples in the original vector space $V$ vs. the recovered factor space, which is the co-domain of first applying the mixing and then the de-mixing functions, $\rvh := \phi \circ \rvf$, $\rvh: V \rightarrow V$. We can state:

\setcounter{theorem}{0}
\begin{theorem}[Identifiability of the Group Representation]
Assume the following:
\begin{enumerate}[ 
    label=(\arabic*),noitemsep, topsep=0pt, parsep=0pt, partopsep=0pt
]
    \item $\rvf$ is an injective mixing function, $g \in G$ is a group element according to Def.~3.
    \item The model $\phi: \R^D \rightarrow V$ satisfies the diversity conditions in Def.~4.
    \item $\mR$ is a representation of $G$, and $\hat{\mR}$ is the implicit representation of the group (Eq.~2) such that $\hat{\mR}(\mX, g\mX) = \mR(g)$ for pairs of transformed samples $(\mX, g\mX)$ and group actions $g \in G$.
\end{enumerate}
Define $\rvh := \phi \circ \rvf$. Then, for matching conditional distributions $p_{\phi} = p_{\rvf^{-1}}$, for all points $\vx$ and group actions $g$ in the dataset:
\begin{enumerate}[ 
    label=(\alph*),noitemsep, topsep=0pt, parsep=0pt, partopsep=0pt
]
    \item We recover the original vector space $\rvh(\vx) = \mL \vx$, up to an ambiguity $\mL \in GL(d)$.
    \item We recover a representation of the group, $\hat \mR(\rvh(\mX), \rvh(g\mX)) = \mL \mR(g) \mL^{-1}$.
\end{enumerate}
\end{theorem}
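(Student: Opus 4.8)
The plan is to reduce the statement to the linear-identifiability result of \citet{roeder2021linear} and then bootstrap from the affine-identifiability conclusion to the exact linear form using the structural constraint coming from the implicit representation $\hat\mR$. First I would rewrite the model likelihood in Eq.~\ref{eq:log-density} so that it matches the canonical discriminative form recalled above. Expanding the Euclidean exponent $-\|\vu_\phi(\vy,\mY,\mY') - \phi(\vy')\|^2 = 2\,\vu_\phi^\top\phi(\vy') - \|\phi(\vy')\|^2 - \|\vu_\phi\|^2$, the term $\|\vu_\phi\|^2$ depends only on the conditioning variables and cancels between numerator and the common factor in the denominator, while $-\|\phi(\vy')\|^2$ plays the role of $\beta(\vy')$ and the constant role of $\alpha$ is trivial. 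Writing $\vu(\vx) := \hat\mR(\phi(\mY),\phi(\mY'))\,\phi(\vy)$ (which is a function of the reference latent and the conditioning pairs) and $\vv(\vy') := 2\phi(\vy')$, the model is exactly in the form $p_{\vu,\vv,\alpha,\beta}$. Because the ground-truth generative process is, by the data-generating assumption and $p_\phi = p_{\rvf^{-1}}$, of the same form with $\vv^\star(\vy') = 2\rvf^{-1}(\vy')$ and $\vu^\star$ built from the \emph{true} representation $\mR$, the matching-distribution hypothesis together with Definition~\ref{assumptions:diversity} lets me invoke the Roeder et al.\ argument: equating log-densities, taking finite differences over the negative set $S$ to kill the $\beta$ terms, and then using the linear-independence of those differences (diversity condition~1) forces $\vv(\vy) = \mA\,\vv^\star(\vy) + \vc$ for a fixed invertible $\mA$ and fixed $\vc$. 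Rephrased in terms of latents, this gives $\phi(\rvf(\vx)) = \rvh(\vx) = \mL\vx + \vb$ for some $\mL \in \mathrm{GL}(d)$ and some offset $\vb$ — i.e.\ \emph{affine} identifiability, the weaker conclusion flagged in the proof sketch.

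Next I would eliminate the offset $\vb$. The key is that the conditioning branch $\vu$ is not an arbitrary function but is tied to $\vv$ through $\hat\mR$: applying the diversity condition~2 (co-occurrence of each $\vy$ as a negative with sufficiently varied references), the same finite-difference argument on the $\vu$-side yields that $\vu(\vx)$ is related to $\vu^\star$ by the transpose-inverse of the same $\mA$, which translates to $\hat\mR(\rvh(\mX),\rvh(g\mX))\,\rvh(\vx) = \mL\,\mR(g)\,\vx$ for all $\vx,g$ in the support. Now I substitute the affine form $\rvh(\vx) = \mL\vx + \vb$ into this identity. On the left, $\rvh(g\vx) = \mL\mR(g)\vx + \vb$; writing $\hat\mR' := \hat\mR(\rvh(\mX),\rvh(g\mX))$, the identity reads $\hat\mR'(\mL\vx + \vb) = \mL\mR(g)\vx$. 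Using that this holds for the identity element $g = e$ (where $\mR(e) = \mI$, so $\hat\mR(\rvh(\mX),\rvh(\mX)) = \mI$) immediately gives $\mL\vx + \vb = \mL\vx$ for all $\vx$ in the support, hence $\vb = 0$ provided the support is nonempty — this nails down conclusion (a), $\rvh(\vx) = \mL\vx$. One has to be slightly careful here that the least-squares estimator $\hat\mR$ evaluated on the recovered points $(\rvh(\mX),\rvh(g\mX))$ indeed returns $\mI$ at $g=e$; this follows because $\rvh(g\mX) = \rvh(\mX)$ exactly when $g=e$ acts trivially, making the residual zero at $\mR=\mI$, and by uniqueness of the minimizer ($M>d$, full column rank) it is the unique minimizer.

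Finally, with $\vb=0$ established, the identity $\hat\mR'\,\mL\vx = \mL\mR(g)\vx$ holds for all $\vx$ in the support. Because the diversity conditions guarantee the recovered points span $\R^d$ (the $d+1$-sample condition ensures $\rvh(\mX)$ has full column rank, equivalently the $\vx$'s span $V$ and $\mL$ is invertible), I can cancel $\vx$ to obtain $\hat\mR' \mL = \mL\mR(g)$, i.e.\ $\hat\mR(\rvh(\mX),\rvh(g\mX)) = \mL\mR(g)\mL^{-1}$, which is conclusion (b). Checking that $g \mapsto \mL\mR(g)\mL^{-1}$ is a genuine representation is then immediate (conjugation is a group automorphism of $\mathrm{GL}(d)$), and Corollary~\ref{corollary-1} follows by the short computation already given in the main text.

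\textbf{Main obstacle.}
The delicate point is not the algebra but the bookkeeping needed to legitimately apply \citet{roeder2021linear}: their theorem is stated for a fixed pair of functions $(\vu,\vv)$ with \emph{independent} domains, whereas here $\vu$, $\vv$, and $\hat\mR$ are all functions of the single encoder $\phi$ and the conditioning variable $\vu$ depends on $\phi$ in a nonlinear way through the least-squares solve $(\mX^\top\mX)^{-1}\mX^\top\mX'$. I expect the real work to be in verifying that the diversity conditions in Definition~\ref{assumptions:diversity} can actually be met under this coupling — in particular that the finite differences $\vu(\vx_A) - \vu(\vx_B)$ span $\R^d$ even though $\vu$ factors through $\hat\mR$, which requires both that $\hat\mR$ is full rank (handled by the $M>d$ assumption) and that enough distinct conditioning pairs $(\mY,\mY')$ co-occur with each negative — and that the cancellation of the $\|\vu_\phi\|^2$ and $\alpha$ terms is valid given the shared parameterization. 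Once the model is correctly cast in canonical form and the diversity conditions are granted as hypotheses, the remaining steps (finite differencing, reading off $\mA$, killing $\vb$ via $g=e$, cancelling $\vx$ via the spanning property) are routine.
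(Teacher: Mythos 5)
Your overall strategy coincides with the paper's: cast Eq.~\ref{eq:log-density} into a generalized canonical discriminative form with scalar potentials $\alpha,\beta$ absorbing the $-\|\vu_\phi\|^2$ and $-\|\phi(\vy')\|^2$ terms, invoke the \citet{roeder2021linear}-style finite-difference argument under the diversity conditions to get \emph{affine} identifiability $\rvh(\vx)=\mL\vx+\vb$, and then use the structure of the implicit representation $\hat\mR$ to kill the offset and read off the conjugation formula. Up to that point you are reconstructing the paper's Appendix~\ref{app:A-proof-theorem-1} faithfully.

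The gap is in the offset-elimination step. The finite-difference argument gives you \emph{two} affine relations with a priori unrelated parameters: $\vv$-side, $\rvh(\vx')=\mB\vx'+\vd$, and $\vu$-side, $\hat\mR(\rvh(\mX),\rvh(g\mX))\,\rvh(\vx)=\mA\,\mR(g)\vx+\vc$ for some \emph{other} invertible $\mA$ and offset $\vc$. You silently drop $\vc$ and identify $\mA$ with $\mL$ (your "transpose-inverse of the same $\mA$" remark would in fact force $\mL^\top\mL=\mI$ if taken literally, which is not what the theorem claims). With the offset restored, your $g=e$ evaluation only yields $\mB\vx+\vd=\mA\vx+\vc$ on a spanning set, i.e.\ $\mA=\mB$ and $\vc=\vd$ --- it does \emph{not} give $\vb=0$. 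To actually kill the offset you need the ingredient you never prove: the conjugation-equivariance of the least-squares estimator under affine reparameterization, $\hat\mR(\rvh(\mX),\rvh(g\mX))=\mB\,\hat\mR(\mX,g\mX)\,\mB^{-1}$ for affine $\rvh$ (the paper's Lemma~\ref{lemma:implicit}, proved via the normal equations and cancellation of the translation part). Substituting this into the $\vu$-side relation and using that the $\vx$'s span, the paper reduces the problem to $(\mB-\mA)\mR(g)=0$ and $\vc=\mB\mR(g)\vd$, and only then concludes $\vc=\vd=0$ by letting $g$ vary (a single group element, including $g=e$, is consistent with nonzero $\vd$). The same lemma is also what delivers conclusion (b) once (a) is in hand, so it is the load-bearing piece of the argument rather than the "bookkeeping" you relegate to the obstacle paragraph. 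Your plan is repairable --- state and prove the lemma, keep both offsets, and use variation over $g$ rather than $g=e$ alone --- but as written the step "$g=e$ immediately gives $\vb=0$" does not go through.
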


As outlined in the main paper, the proof proceeds as follows: First, we extend the canonical discriminative form of \citet{roeder2021linear} towards a more general setting, giving affine instead of linear identifiability (\textsection\ref{sec:canonical-discriminative-form}). Then, we show that our implicit group representation admits   (\textsection\ref{sec:implicit-rep-properties}). Finally, we leverage these results to arrive at the indeterminacies reported in the theorem (\textsection\ref{sec:main-proof-theorem1}).

\subsection{Extended canonical discriminative form} \label{sec:canonical-discriminative-form}

    The following proposition is only a slight modification from Eq.~1 in \citet{roeder2021linear}. The key difference is the introduction of the scalar potential functions $\alpha$ and $\beta$, making the canonical discriminative form more flexible. In particular, it now admits a mean squared error loss function.

    Under these assumptions, we can state:
    \begin{theorem}[Generalized Canonical Discriminative Form] \label{prop:canonical-discriminative}
        Let $\vu, \vv: \R^d \rightarrow \R^d$, $\alpha, \beta: \R^d \rightarrow \R$ be functions satisfying Def.~\ref{assumptions:diversity}. In particular, $\vu, \vv, \alpha, \beta$ can have shared underlying structure and are not necessarily independent.
        Then, the probabilistic model of the form
        \begin{equation}
            p_{\vu,\vv, \alpha, \beta}(\vy \mid \vx, S)
            = \frac{\exp\big( \vu(\vx)^\top \vv(\vy)  + \alpha(\vx) + \beta(\vy) \big)}
                   {\sum_{\vy' \in S} \exp\big( \vu(\vx)^\top \vv(\vy')  + \alpha(\vx) + \beta(\vy') \big)}
        \end{equation}
        with $\vy \in S$ is identifiable up to an affine indeterminacy, i.e., for two models $(\vu',\vv', \alpha', \beta')$ and $(\vu^*,\vv^*, \alpha^*, \beta^*)$ and their respective distributions $(p', p^*)$, we have
        for all $\vx, \vy$ with $\pdata(\vx, \vy, S)>0$,
        \begin{align}
            p' = p^* \implies \begin{cases}
                \vu'(\vx) &= \mA \vu^*(\vx) + \vc \\
                \vv'(\vy) &= \mB \vv^*(\vy) + \vd
            \end{cases}
        \end{align}
        for two invertible $d\times d$ matrices $\mA$ and $\mB$ and vectors $\vc, \vd \in \R^d$.
    \end{theorem}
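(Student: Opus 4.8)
The plan is to follow the strategy of Roeder et al.\ closely, treating the scalar potentials $\alpha$ and $\beta$ as additional coordinates appended to $\vu$ and $\vv$, and then transfer their linear-identifiability argument to this augmented setting. Concretely, I would define the augmented feature maps $\tilde\vu(\vx) := (\vu(\vx)^\top, \alpha(\vx), 1)^\top \in \R^{d+2}$ and $\tilde\vv(\vy) := (\vv(\vy)^\top, 1, \beta(\vy))^\top \in \R^{d+2}$, so that $\vu(\vx)^\top\vv(\vy) + \alpha(\vx) + \beta(\vy) = \tilde\vu(\vx)^\top\tilde\vv(\vy)$. The softmax in the definition of $p_{\vu,\vv,\alpha,\beta}$ is invariant to adding any function of $\vx$ alone to the logits (it cancels between numerator and denominator), so the model is exactly the canonical discriminative form of Roeder et al.\ in the augmented variables, up to that gauge freedom.

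The key steps are then: (1) Fix $\vx$ and $S$; from $p' = p^*$ and the softmax structure, deduce that for every $\vy \in S$ the log-ratio $\tilde\vu'(\vx)^\top\tilde\vv'(\vy) - \tilde\vu^*(\vx)^\top\tilde\vv^*(\vy)$ equals a constant $c(\vx,S)$ independent of $\vy$. (2) Subtract this identity for two points $\vy_A,\vy_B$ in the same $S$ to kill $c(\vx,S)$, obtaining $\tilde\vu'(\vx)^\top(\tilde\vv'(\vy_A)-\tilde\vv'(\vy_B)) = \tilde\vu^*(\vx)^\top(\tilde\vv^*(\vy_A)-\tilde\vv^*(\vy_B))$. (3) Invoke diversity condition~1 (applied in the augmented coordinates) to assemble enough linearly independent difference vectors $\tilde\vv'(\vy_A)-\tilde\vv'(\vy_B)$ to span $\R^{d+2}$ — here one must be slightly careful because the last two augmented coordinates of $\tilde\vv$ are $(1,\beta)$ so the differences lie in a $(d+1)$-dimensional subspace; I would handle this by noting the relevant span is $(d+1)$-dimensional and that the constant-coordinate direction is exactly what is absorbed into $c(\vx,S)$. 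From this one concludes $\tilde\vv'(\vy) = \tilde\mB\,\tilde\vv^*(\vy)$ for some fixed $(d+1)\times(d+1)$-type linear map modulo the gauge. (4) Symmetrically, using diversity condition~2 (the co-occurrence condition with $\vy$ in both negative sets $S_A, S_B$), subtract across two reference points to eliminate the $\vy$-only term and obtain $\tilde\vu'(\vx) = \tilde\mA\,\tilde\vu^*(\vx)$ modulo gauge. (5) Read off the top $d$ blocks: $\vv'(\vy) = \mB\vv^*(\vy) + \vd$ and $\vu'(\vx) = \mA\vu^*(\vx) + \vc$, where the affine offsets $\vc,\vd$ are precisely the contributions of the appended constant/potential coordinates. (6) Finally, argue $\mA,\mB$ are invertible: if either were singular the bilinear form $\tilde\vu'^\top\tilde\vv'$ would have strictly smaller rank than $\tilde\vu^*{}^\top\tilde\vv^*$ as a function on the dataset, contradicting that both reproduce the same logits together with the diversity (full-rank / spanning) conditions; this is the same rank argument as in Roeder et al.

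I expect the main obstacle to be step~(3)/(4): carefully bookkeeping the gauge freedom of the softmax (the ``$+$ function of $\vx$'' ambiguity) together with the degenerate constant coordinate in the augmented vectors, so that one genuinely obtains a \emph{clean} affine relation on the original $d$-dimensional $\vu,\vv$ rather than only a statement about the $(d+2)$-dimensional augmented maps that might entangle $\alpha,\beta$ with the linear parts. The resolution is to observe that the softmax gauge acts only on the $\vx$-side constant, the least-squares/diversity argument pins down everything else up to a block-triangular linear map, and projecting onto the first $d$ coordinates yields exactly $\mA\vu^* + \vc$ and $\mB\vv^* + \vd$ with $\mA,\mB \in GL(d)$. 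The remaining pieces — the constant-ratio deduction in step~(1), the difference trick in step~(2), and the invertibility argument in step~(6) — are essentially verbatim adaptations of Roeder et al.\ and should go through without new ideas.
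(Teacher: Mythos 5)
Your proposal is correct and follows essentially the same route as the paper's proof: the finite-difference trick on log-probabilities across pairs $\vy_A,\vy_B \in S$ to cancel the normalizer and the $\vx$-side potential, stacking linearly independent differences via the diversity conditions to obtain the affine map for $\vu$, and the symmetric argument with pairs $\vx_A,\vx_B$ for $\vv$. Your $(d+2)$-dimensional augmentation is just a repackaging of the paper's direct bookkeeping (where the $\Delta\beta$ and $\Delta\tilde\alpha$ terms become the affine offsets $\vc,\vd$), and your rank-based invertibility argument is subsumed by the paper's assumption that the difference vectors are linearly independent for both models.
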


    \begin{proof}
    The proof technique adopts the strategy from \citet{roeder2021linear} for the indeterminacy of $\vu$, and re-applies this to the indeterminacy of $\vv$. The approach considers finite differences on the log probabilities for $p, p^*$ with
    \begin{align}
        \log p(\vy | \vx, S) &= \vu(\vx)^\top \vv(\vy) + \alpha(\vx) + \beta(\vy) - Z(\vx, S) \\
                             &= \vu(\vx)^\top \vv(\vy) + \tilde \alpha(\vx, S) + \beta(\vy).
    \end{align}
        
    \paragraph{Part 1: Affine identifiability of $\vu$.}
        For any $\vx$ in the data distribution, and corresponding $S$, select two points $\vy_A, \vy_B \in S$. For the points, it holds that
        \begin{align}
            \log p(\vy_A | \vx, S) - \log p(\vy_B | \vx, S) &= \log p^*(\vy_A | \vx, S) - \log p^*(\vy_B | \vx, S) \\
            \intertext{Let $\Delta \vv_{AB} := \vv(\vy_A) - \vv(\vy_B)$, $\Delta \beta_{AB} := \tilde \beta(\vy_A) - \beta(\vy_B)$ to obtain}
            \Delta\vv_{AB}^\top \vu(\vx) + \Delta \beta_{AB}
            &= (\Delta\vv^*_{AB})^\top \vu^*(\vx) + \Delta \beta^*_{AB} \\
            \intertext{By assumption (1), we can find $N$ such pairs with linearly independent $\Delta\vv_{AB}$, $\Delta\vv^*_{AB}$, which lets us re-write the equation using the full-rank matrices $\mL, \mL^*$}
            \mL^\top \vu(\vx) + \vc &= (\mL^*)^\top \vu^*(\vx) + \vc^* \\
            \vu(\vx)  &= (\mL^* \mL^{-1})^\top \vu^*(\vx) + (\vc^* - \vc)
        \end{align}
        which requires the map from $\vu$ to $\vu^*$ to be affine, concluding the first part of the proof.
        
    \paragraph{Part 2: Affine identifiability of $\vv$.}

        For a given $\vy$, find $\vx_A, \vx_B, S_A, S_B$ such that $\vy \in S_A \cap S_B$, and note that $S_A = S_B$ is possible but not required. Then we consider the finite differences:
        \begin{align}
            \log p(\vy | \vx_A, S_A) - \log p(\vy | \vx_B, S_B) &= \log p^*(\vy | \vx_A, S_A) - \log p^*(\vy | \vx_B, S_B) \\
            \intertext{Let $\Delta \vu_{AB} := \vu(\vx_A) - \vu(\vx_B)$, $\Delta \tilde \alpha_{AB} := \tilde \alpha(\vx_A, S) - \tilde \alpha(\vx_B, S)$ to obtain}
            \Delta\vu_{AB}^\top \vv(\vy) + \Delta \tilde \alpha_{AB}
            &= (\Delta\vu^*_{AB})^\top \vv^*(\vy) + \Delta \tilde \alpha^*_{AB} \\
            \intertext{Stacking multiple conditions gives}
            \mL^\top \vv(\vy) + \vc &= (\mL^*)^\top \vv^*(\vy) + \vc^* \\
            \vv(\vy)  &= (\mL^* \mL^{-1})^\top \vv^*(\vy) + (\vc^* - \vc)
        \end{align}
        which requires the map from $\vv$ to $\vv^*$ to be affine, concluding the proof.
    \end{proof}

\subsection{Properties of the implicit group representations}\label{sec:implicit-rep-properties}

    We next consider properties of the implicit group representation defined in Sec.~\ref{sec:methods}, given by
    \begin{equation}
        \hat \mR(\mX, \mX') = \min_{\mR \in \mathrm{GL}(d)} \| \mX' - \mX \mR^\top \|_F^2
        \Leftrightarrow
        \hat \mR(\mX, \mX') = (\mX^\top \mX)^{-1} (\mX^\top \mX').
    \end{equation}
    
    \begin{lemma}[Equivariance of implicit group representations]\label{lemma:implicit}
        Let $\mX, \mX' \in \R^{m\times n}$, $m \ge n$ have rank $n$.
        Define $\hat \mR:  \R^{m\times n} \times \R^{m\times n} \rightarrow \R^{n\times n}$ as 
        \begin{equation}
            \hat{\mR}(\mX, \mX')
                = \arg\min_\mL \| \mX' - \mX\mL^\top  \|_F^2
        \end{equation}
        and let $\rvf(\mX) = \mX \mA^\top + \mathbf{1}_m \vb^\top$ be an invertible affine transform applied to each row of $\mX$, then we have
        \begin{equation}
            \hat{\mR}(\rvf(\mX), \rvf(\mX')) = \mA \hat{\mR}(\mX, \mX') \mA^{-1}.
        \end{equation}
    \begin{proof}
        The solution to the least squares problem can be re-written using the normal equations,
        \begin{align}
            \hat{\mR}(\mX, \mX') &= \arg\min_\mL \| \mX' - \mX\mL^\top  \|_F^2 
          \quad\Leftrightarrow\quad \left( \mX^\top \mX \right) \hat{\mR}(\mX, \mX')^\top  =  {\mX}^\top {\mX}' 
        \end{align}
        Let $\tilde{\mX} = \mX \mA^\top$ and $\tilde{\mX}' = \mX' \mA^\top$. Then, inserting the definition of $\rvf$ gives
        \begin{align}
            \hat{\mR}(\rvf(\mX), \rvf(\mX')) &= \mathrm{arg} \min_\mL \| \tilde{\mX}' + \mathbf{1}_m \vb^\top - \tilde{\mX} \mL^\top 
 - \mathbf{1}_m \vb^\top \|_F^2  \\ 
                 &= \mathrm{arg} \min_\mL \| \tilde{\mX}' - \tilde{\mX} \mL^\top \|_F^2
        \end{align}
        which is equivalent to solving the normal equations
        \begin{align}
            \left( \tilde{\mX}^\top \tilde{\mX} \right) \hat{\mR}(\rvf(\mX), \rvf(\mX'))^\top   &=  \tilde{\mX}^\top \tilde{\mX}' \\
            \intertext{Substituting back in terms of $\mX$ and $\mA$:}
            \mA \mX^\top \mX \mA^\top \hat{\mR}(\rvf(\mX), \rvf(\mX'))^\top &= \mA \mX^\top \mX' \mA^\top
            \intertext{$\mA$ is invertible by assumption and it follows}
            \mX^\top \mX \left[\mA^\top \hat{\mR}(\rvf(\mX), \rvf(\mX'))^\top \mA^{-\top}\right] &= \mX^\top \mX'\\ 
            \intertext{Note that $\mX^\top \mX \in \R^{n \times n}$ has full rank by the assumption on $\mX$. Then, we get}
            \mA^\top \hat{\mR}(\rvf(\mX), \rvf(\mX'))^\top \mA^{-\top} &= (\mX^\top \mX)^{-1}  \mX^\top \mX' = \hat{\mR}(\mX, \mX')^\top \\
            \intertext{Taking the transpose and re-arranging yields}
            \hat{\mR}(\rvf(\mX), \rvf(\mX')) &= \mA \hat{\mR}(\mX, \mX') \mA^{-1}
        \end{align}
        concluding the proof.
    \end{proof}
    \end{lemma}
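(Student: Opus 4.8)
The plan is to reduce everything to the normal equations that characterize the least-squares minimizer and then push the affine map through them. Since $\mX$ has rank $n$ and $m \ge n$, the Gram matrix $\mX^\top\mX$ is invertible, so the minimizer is unique and is exactly the matrix solving $(\mX^\top\mX)\,\hat{\mR}(\mX,\mX')^\top = \mX^\top\mX'$; this is the only fact about $\hat{\mR}$ I would use. The two things that then need handling are (i) the additive offset $\mathbf{1}_m\vb^\top$, and (ii) turning the left/right multiplication by $\mA$ into a conjugation, which is pure linear algebra once $\mA$ is known to be invertible.

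For (i): writing $\tilde\mX := \mX\mA^\top$ and $\tilde\mX' := \mX'\mA^\top$, I would argue that the constant row $\vb^\top$ contributes identically to the fitting target and to the model, so it cancels in the residual and leaves $\hat{\mR}(\rvf(\mX),\rvf(\mX')) = \arg\min_\mL \|\tilde\mX' - \tilde\mX\mL^\top\|_F^2$; because $\mA$ is invertible, $\tilde\mX$ still has rank $n$, so the same normal-equations characterization applies to this reduced problem. This offset step is the one place that deserves care (see the last paragraph); it is immediate when $\vb = 0$, which is exactly the situation in which the lemma is invoked inside Theorem~1, where a second constraint forces the translation to vanish.

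For (ii): the normal equations of the reduced problem are $\mA(\mX^\top\mX)\mA^\top\,\hat{\mR}(\rvf(\mX),\rvf(\mX'))^\top = \mA(\mX^\top\mX')\mA^\top$. Cancelling the leftmost $\mA$ and then multiplying on the right by $\mA^{-\top}$ rewrites this as $(\mX^\top\mX)\big[\mA^\top\hat{\mR}(\rvf(\mX),\rvf(\mX'))^\top\mA^{-\top}\big] = \mX^\top\mX'$; invertibility of $\mX^\top\mX$ then forces the bracket to equal $(\mX^\top\mX)^{-1}\mX^\top\mX' = \hat{\mR}(\mX,\mX')^\top$. Solving for $\hat{\mR}(\rvf(\mX),\rvf(\mX'))^\top = \mA^{-\top}\hat{\mR}(\mX,\mX')^\top\mA^\top$ and transposing gives $\hat{\mR}(\rvf(\mX),\rvf(\mX')) = \mA\,\hat{\mR}(\mX,\mX')\,\mA^{-1}$, as claimed.

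The main obstacle is precisely the offset: literally $\rvf(\mX)\mL^\top = \tilde\mX\mL^\top + \mathbf{1}_m(\mL\vb)^\top$, so the model's constant row has been acted on by $\mL$ and only cancels the target's $\vb^\top$ when the optimal $\mL$ fixes $\vb$. The clean resolutions are either to prove the $\vb=0$ case and then invoke the $\vb=0$ normalization the surrounding theorem provides, or to assume $\vb=0$ outright; with that in hand, the remainder is the routine normal-equations manipulation above.
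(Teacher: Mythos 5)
Your proposal follows the same route as the paper's proof: characterize the minimizer via the normal equations $(\mX^\top\mX)\,\hat{\mR}(\mX,\mX')^\top=\mX^\top\mX'$ (valid since $\mX^\top\mX$ is invertible), and then cancel and rearrange the factors of $\mA$ into a conjugation. That part of your argument is correct and matches the paper line for line.

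The noteworthy difference is your treatment of the offset, and here you have actually caught a genuine flaw in the paper's own proof rather than merely being cautious. The paper cancels the two $\mathbf{1}_m\vb^\top$ terms in the residual, but as you observe, $\rvf(\mX)\mL^\top=\tilde{\mX}\mL^\top+\mathbf{1}_m(\mL\vb)^\top$, so the model's constant row is $(\mL\vb)^\top$, not $\vb^\top$, and the cancellation fails unless $\vb=0$. The lemma as stated is in fact false for $\vb\neq 0$: with $n=1$, $m=2$, $\mX=(1,2)^\top$, $\mX'=(2,4)^\top$, $\mA=1$, $\vb=1$, one gets $\hat{\mR}(\mX,\mX')=2$ but $\hat{\mR}(\rvf(\mX),\rvf(\mX'))=21/13$. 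Your resolution—prove the $\vb=0$ case and rely on the surrounding argument to dispose of the translation—is the right fix for the lemma itself, though note that the paper's proof of Theorem~1 invokes the lemma for a general affine $\rvh$ \emph{before} deriving $\vd=0$, so restricting the lemma to $\vb=0$ requires a corresponding reordering or separate argument in the main proof as well. Aside from this (correctly identified) issue, your proof is complete.
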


\subsection{Proof of Theorem 1}\label{sec:main-proof-theorem1}

    \begin{proof}
    We have $p_{\phi} = p_{\rvf^{-1}}$. After rewriting Eq.~\ref{eq:log-density} in terms of $\vx, \mX, \mX'$ instead of $\vy = \rvf(\vx), \mY = \rvf(\mX), \mY' = \rvf(\mX')$, our model follows the generalized canonical discriminative form (Proposition~\ref{prop:canonical-discriminative}) with the following parametrization for the data likelihood $p_{\rvf^{-1}}$: 
    \begin{align}
        \vu^*(\vx, \mX, \mX') &= \hat \mR(\mX, \mX') \vx = \mR(g) \vx  \\
        \vv^*(g \vx) &= g \vx.
    \end{align}
    To specify the model likelihood $p_\phi$, we introduce the shorthand $\rvh := \phi \circ \rvf$ which maps samples from the ground truth latent space to the recovered latent space (e.g., the composition of the data generating process and feature encoder) and obtain
    \begin{align}
        \vu'(\vx, \mX, \mX') &= \hat \mR(\rvh(\mX), \rvh(\mX')) \rvh(\vx) \\
        \vv'(\vx') &= \rvh(\vx')
    \end{align}
    By assumption $p_\phi = p_{\rvf^{-1}}$. From Proposition~\ref{prop:canonical-discriminative} it then follows that
    \begin{align}
        \vu'(\vx,\mX, \mX') &= \mA \vu^*(\vx,\mX, \mX') + \vc \\
        \vv'(\vx') &= \mB \vv^*(\vx') + \vd.
    \end{align}
    where $\mA,\mB \in \textrm{GL}(d)$ and $\vc, \vd \in \R^d$.
    Inserting model and data generating process yields
    \begin{align}
        \rvh(\vx') &= \mB \vx' + \vd \\
        \hat\mR(\rvh(\mX), \rvh(\mX')) \rvh(\vx) &= \mA \hat \mR(\mX, \mX') \vx + \vc \\
        \intertext{Since $\rvh$ is affine, we insert the first into the second equation and invoke Lemma~\ref{lemma:implicit} to arrive at}
        \mB \hat \mR(\mX, \mX')\mB^{-1} (\mB \vx + \vd) &= \mA \hat\mR(\mX, \mX') \vx + \vc \\
        \intertext{and re-arranging yields}
        \mB \hat \mR(\mX, \mX') \vx + \mB \hat \mR(\mX, \mX')\vd &= \mA \hat\mR(\mX, \mX') \vx + \vc \\
        (\mB - \mA) \hat \mR(\mX, \mX') \vx &= \vc - \mB \hat \mR(\mX, \mX')\vd \\ 
        \intertext{This is an equation of the form $\mU \vx = \vv$. Assuming that we observe $\vx$ such that matrix collecting all $\vx$ has full rank, i.e. all latent dimensions vary, it follows that $\mU = 0$, $\vv = 0$, hence}
        (\mB - \mA) \hat \mR(\mX, \mX') &= 0 \\
        \mB \hat \mR(\mX, \mX')\vd = \vc \\ 
        \intertext{and inserting $\hat \mR(\mX, \mX') = \mR(g)$ gives}
        (\mB - \mA) \mR(g) &= 0 \\
        \vc = \mB \mR(g)\vd 
    \end{align}
    Since $\mR(g)$ has full rank, from the first equation it follows that $\mA = \mB$.
    For the second equation, since the left hand side is independent of $g$ and all matrices have full rank, we can only admit the trivial solution where $\vc = \vd = 0$.
    It follows that
    \begin{align}
        \rvh(\vx) &= \mB \vx \quad \forall \vx \in S \\
        \hat \mR(\rvh(\mX), \rvh(\mX')) &= \mB \hat \mR (\mX, \mX') \mB^{-1} 
    \end{align}
    for an invertible matrix $\mB$, which concludes the proof.
    \end{proof}

\clearpage
\raggedbottom
\section{Additional Experimental Details}\label{app:B-experiments}
    In this section, we provide further details on the algorithm and other experimental details.
    As an extension of this section, we would like to point to our figure and results repository (\href{https://github.com/dynamical-inference/NeurIPS2025_Schmidt}{https://github.com/dynamical-inference/NeurIPS2025\_Schmidt}) as well as to our code repository (\href{https://github.com/dynamical-inference/ebc}{https://github.com/dynamical-inference/ebc}) as the reference for all experimental details.
    The figure repository contains the code to plot the figures, but more importantly, it also contains all the raw results and all training and data generation parameters of all model runs that went into the respective figures. Each of those parameters can easily be traced back to their definition in the main code base repository. 

    \subsection{Experimental protocol}
    Here we describe the default dataset generation and training protocol of our experiments first, and then provide additional details for any deviation from these default parameters that may exist in any of the figures. 
    
    \subsubsection{Dataset Generation}
    
    \paragraph{Synthetic Group Data}
    We consider three types of synthetic datasets following the data generating process of Theorem ~\ref{theorem-1}. 
    Group elements are taken from a subgroup of the special orthogonal group $SO(n)$, the orthogonal group $O(n)$ and the general linear group $GL(n)$:
    \begin{enumerate}
        \item We sample a random group elements $\mR(g) \sim p_G(g)$ from $G \in \{SO(n), O(n), GL(n)\}$. 
        \item We sample $m$ (with $m>n$) random vectors $\vx\in \mathbb{S}^n$ from the n-dimensional unit sphere. 
        \item We sample the content component $\vc\in C$ from a finite set of vectors $C \subset S^d$. 
        \item From $(\mR(g), \{\vx_i\}^m, \vc)$ we generate $\mY=\{\vy_i, \vy'_i\}^m$, \\ where $\vy_i=\rvf(\vx_i, \vc)$ and $\vy'_i=\rvf(\mR(g)\vx_i, \vc)$.
        \item We repeat this sampling procedure until we reach the desired dataset size.
    \end{enumerate}

    To sample from the $O(n)$ and $SO(n)$ group, we sample from the Haar distribution \cite{mezzadri2007}, making use of the \href{https://docs.scipy.org/doc/scipy-1.15.3/reference/generated/scipy.stats.ortho_group.html}{SciPy} \citep{2020SciPy-NMeth} package.
    To sample from $GL(n)$ we implement a rejection sampling procedure. We first generate a random $\mA \in \mathbb{R}^{n\times n}$ matrix with values $a_{i,j}$ from a standard normal distribution. Then  we compute the determinant $det(\mA)$ \& the "identity error" $I_{err} = |\mI - AA^{-1}|^F$ and we reject A if $|det(A)| < 0.2$ or $det(A) > 10$ or $I_{err}>10^{-3}$, otherwise we accept.
    The nonlinear injective mixing function $\rvf$ is parameterized by a random 3-layer MLP and sampled via rejection sampling to fulfill the injectivity assumption as is done in \citep{hyvarinen2016unsupervised, laiz2025selfsupervised}. The MLP is followed by a random linear map to a 50-dimensional space. 
    For the full dataset, we sample a maximum of 1000 matrices $\mR(g)$ and a total of 1M pairs $(\vy, \vy')$. Unless stated otherwise, we choose $n=3$, $d=3$, $|C|=100$.
        
    \paragraph{Infinite dSprites}
        We leverage infinite dSprites \citep[idSprites]{dziadzio2023infinite}, an extension of dSprites~\citep{dsprites17} for validation on more diverse visual data. infinite dSprites is an extension of the original dSprites dataset, which allows to generate variations of dSprites dataset. 
        We subsample all images to a resolution of $64 \times 64$. The dataset allows rich variations of the content and ensures that the resulting objects do not have any symmetries. By default we use the same number of varying factors as in dSprites, namely 3 random shapes of 6 different sizes. For our purposes we consider the combination of shape and size the content. Each of these objects may then be oriented in 40 different ways or have one of $32\times32$ x or y position translations. We consider the orientations and translations to be our groups of interest and sample the data such that we get closed groups. By default, we sample the dataset such that any paired sample $(\vy, \vy\prime)$ is undergoing a combination of these three types of transformations. 

    \subsubsection{Dataset Split}
        Before training on the synthetic group dataset, we perform a hold-out split in terms of the group actions $g_i$ into a training, validation, and test dataset such that we get a 80/10/10 split of the group actions.
         For idSprites, we randomly sample approx. 20k group actions during training and sample another approx. 90k group actions for validation and test respectively. 
         When computing metrics that require to fit another model, such as a KNN, Linear Regression or Logistic Regression model, we split the validation or test dataset again 50/50 and report the metric on the hold-out set. 

    \subsubsection{Model Training}
    
    \paragraph{Estimating $\mR(g)$}
        To estimate $\hat \mR(\mX, \mX')$ we use $k\cdot n$ additional samples where $n$ is the assumed group dimensionality of the embedding space and $k$ is the samples per action factor. We fit $\hat \mR(\mX, \mX')$ via linear least squares (PyTorch's "gels" solver). By default, we do not compute gradients through the linear least squares estimation to save on compute time. 
        By default for idSprites we use $k=12$ and for the synthetic group dataset we use $k=4$.
        
    \paragraph{Encoder Model}
        For the feature encoder $\phi$ we use an MLP with three layers and hidden unit size of 512 for idSprites and 128 for the synthetic group dataset.
        The output dimension of the MLP is $n+d$ where $n$ is the assumed group dimensionality of the embedding space and $d$ the assumed content dimensionality.
        Before passing the idSprites images to the MLP we scale them to $[-1, 1]$ and flatten the image. 
        By default for idSprites we assume group dimensionality $n=7$ and content dimensionality $d=2$. 
        For the synthetic group dataset we use the same dimensionalities $n$ and $d$ as are used for the data generation.

    \paragraph{Loss Function \& Optimizer}
        
        As loss we minimize the negative mean likelihood (Eq.~\ref{eq:log-density}) and for each sample compute the loss in both directions $L(\vy, \vy')$ and $L(\vy', \vy)$. As negatives $S$, we sample uniformly from all available samples in the dataset. 
        We use 1024 positive pairs and 16k ($2^{12}$) negative samples in each batch and train each model for 20k iterations.
        We train the encoder via gradient descent using the Adam optimizer \citep{kingma2014adam} with a learning rate of $1\times10^{-3}$. 

    \paragraph{Error bounds}
        For our main results, to compute error bounds or standard deviation across our metrics, we generate each dataset with three different seeds and fit each model three times with different seeds during training. For variations and ablation experiments extending our main results, we only fit a single model across the three different dataset seeds. We refer to the Figure \& Results repository for details on the exact number of seeds for every experiment.

     \subsection{Hyperparameter Selection}

     We use the $Acc(G, 1)$ metric for any and all hyperparameter selection, since this is an observable metric and a good proxy for the identifiability metrics $R^2(G)$  (see Figure~\ref{fig:toydata-samples-combined}) and, by extension, for $R^2(x)$.
     Additionally, when we perform explicit hyperparameter selection and do not present results from experimental variations or ablation studies, we display the metrics computed on the test set instead of the metrics from the validation set, which were used for hyperparameter selection. Otherwise, we show the metrics on the validation set. Again, we refer to the Figure and Results repository for more details.

    \subsection{Third-Party License Information}
        We used the infinite dSprites (idSprites; \citealp{dziadzio2023infinite}) dataset available at \url{https://github.com/sbdzdz/idsprites} in the pip-installable version {v1.0.1} (MIT License).

    \subsection{Compute Resources}
        Experiments were carried out on a compute cluster with A100 cards with 40Gb VRAM. On each card, we ran 2--6 experiments simultaneously, depending on the dataset size. The run time for individual experiments trained for 20k steps varied between approximately 10 minutes and one hour depending on the experiment configuration, the most important factors that impact the train time being batch sizes, specifically the number of negative samples, and the $k\cdot n$ number of samples used to fit the linear least squares estimator. 

\clearpage

\section{Additional Experimental Results}\label{app:C-more-results}

\subsection{Recovery of group structure for \texorpdfstring{$n>3$}{n>3}.}\label{app:C-n_larger_three}

    We extend our results from the paper towards higher dimensions, and consider $SO(n), O(n), GL(n)$ for $n = 3,5,7,9$. Results are depicted in Fig. \ref{fig:toydata--dimensions}. We mirror the main paper results with close to perfect reconstruction scores both for the latent space and the group representation for $SO(n), O(n)$, but notice a drop in performance for $GL(n)$ as dimensionality increases beyond 5. In this case, we can recover the performance for $n=7$ if we allow to backpropagate through the least-squares solver.
    As the dimension grows to $n=9$, the model is no longer able to discover the full group structure, likely due to a mismatch between the complexity of $GL(9)$ and the size of the dataset/variation present.

\begin{center}
    \includegraphics[width=.7\linewidth]{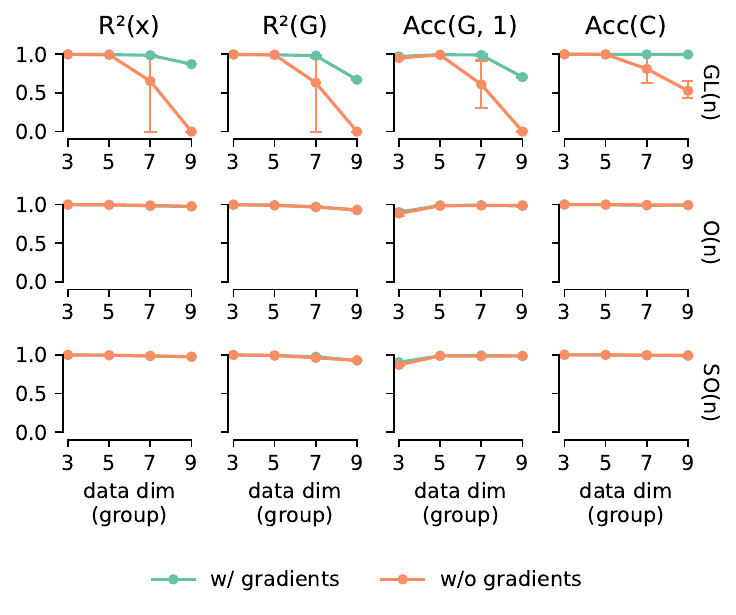}
    \captionof{figure}{\textbf{Higher Latent Dimensions for Synthetic Group Data}. Comparison of EbC across an increasing number of true latent dimensions of the synthetic group dataset. We show results for all group types $GL(n)$, $O(n)$, $SO(n)$ with $n$ being on the x axis and fixed content dimension $d=5$. Additionally, we indicate whether we compute gradients through the linear least squares estimator or not.}
    \label{fig:toydata--dimensions}
\end{center}

\subsection{Data efficiency}\label{app:C-data_efficiency}

    We analyze the data efficiency of our method by varying three key hyperparameters while keeping all other settings identical to those for $n=3$ in Table~\ref{tab:overview} of the main paper. We test across $SO(3)$, $O(3)$, and $GL(3)$ by:
    \begin{enumerate}
        \item Varying the total dataset size.
        \item Varying the number of negative samples used in the contrastive loss.
        \item Varying the number of samples per action ($k$) used for the linear least-squares fit of $\hat{\mR}(g)$.
    \end{enumerate}

    \textbf{Discussion (Dataset Size):} The results in Table \ref{tab:app_dataset_size} show that our method is highly robust to smaller dataset sizes. Even with 20x less data (50k vs 1000k), the identifiability of the equivariant representation ($R^2(x)$ and $R^2(G)$) remains nearly perfect, staying above 99.6\% for all groups. The primary effect of smaller datasets is a moderate drop in accuracy for the invariant part of the embedding ($Acc(C)$).

\clearpage
\begin{center}
    \captionof{table}{Data efficiency: Reduced dataset size. We evaluate EbC trained on datasets ranging from 50k to 1M samples (1000k). The 1000k setting replicates the main paper's results. We report mean and standard deviation over 5 runs for $n=3$.}\label{tab:app_dataset_size}
    \footnotesize
    \begin{tabular}{llllll}
        \toprule
         & Dataset Size & 1000k & 100k & 500k & 50k \\
        Group Type & Metric &  &  &  &  \\
        \midrule
        \multirow[t]{3}{*} SO(n) & R²(x) & \tnum{99.81}{0.04} & \tnum{99.79}{0.06} & \tnum{99.75}{0.22} & \tnum{99.80}{0.03} \\
         & R²(G) & \tnum{99.75}{0.04} & \tnum{99.71}{0.07} & \tnum{99.66}{0.28} & \tnum{99.72}{0.05} \\
         & Acc(C) & \tnum{99.19}{0.69} & \tnum{97.30}{2.11} & \tnum{98.97}{0.72} & \tnum{95.25}{1.96} \\
         \midrule
        \multirow[t]{3}{*}{O(n)} & R²(x) & \tnum{99.80}{0.06} & \tnum{99.81}{0.03} & \tnum{99.79}{0.12} & \tnum{99.73}{0.16} \\
         & R²(G) & \tnum{99.73}{0.06} & \tnum{99.74}{0.03} & \tnum{99.72}{0.14} & \tnum{99.65}{0.19} \\
         & Acc(C) & \tnum{99.06}{0.87} & \tnum{97.90}{1.46} & \tnum{98.92}{0.76} & \tnum{94.57}{1.68} \\
         \midrule
        \multirow[t]{3}{*}{GL(n)} & R²(x) & \tnum{99.83}{0.02} & \tnum{99.82}{0.03} & \tnum{99.82}{0.02} & \tnum{99.82}{0.02} \\
         & R²(G) & \tnum{99.72}{0.05} & \tnum{99.71}{0.05} & \tnum{99.71}{0.04} & \tnum{99.68}{0.05} \\
         & Acc(C) & \tnum{98.42}{0.72} & \tnum{96.42}{1.27} & \tnum{98.37}{0.68} & \tnum{92.02}{2.53} \\
        \bottomrule
    \end{tabular}
    \end{center}

\begin{center}
\captionof{table}{Data efficiency: Reduced number of negative samples. We vary the number of negative samples per batch from 1024 to 16384. The 16384 setting replicates the results of table~\ref{tab:overview}. We report mean and standard deviation over 5 runs for $n=3$.}
\label{tab:app_negatives}
\footnotesize
\begin{tabular}{lllllll}
\toprule
 & Neg. sample batch size & 1024 & 2048 & 4096 & 8192 & 16384 \\
Group Type & Metric &  &  &  &  &  \\
\midrule
\multirow[t]{3}{*}{SO(n)} & R²(x) & \tnum{99.51}{0.26} & \tnum{99.66}{0.09} & \tnum{99.48}{0.71} & \tnum{99.77}{0.04} & \tnum{99.81}{0.04} \\
 & R²(G) & \tnum{99.28}{0.38} & \tnum{99.52}{0.11} & \tnum{99.30}{0.91} & \tnum{99.67}{0.04} & \tnum{99.75}{0.04} \\
 & Acc(C) & \tnum{98.23}{1.38} & \tnum{98.78}{0.76} & \tnum{98.60}{1.32} & \tnum{99.13}{0.69} & \tnum{99.19}{0.69} \\
\midrule
\multirow[t]{3}{*}{O(n)} & R²(x) & \tnum{99.55}{0.07} & \tnum{99.63}{0.11} & \tnum{99.62}{0.28} & \tnum{99.75}{0.10} & \tnum{99.80}{0.06} \\
 & R²(G) & \tnum{99.37}{0.10} & \tnum{99.49}{0.13} & \tnum{99.49}{0.36} & \tnum{99.64}{0.14} & \tnum{99.73}{0.06} \\
 & Acc(C) & \tnum{98.49}{0.81} & \tnum{98.90}{0.59} & \tnum{98.97}{0.57} & \tnum{99.09}{0.62} & \tnum{99.06}{0.87} \\
\midrule
\multirow[t]{3}{*}{GL(n)} & R²(x) & \tnum{99.70}{0.04} & \tnum{99.72}{0.06} & \tnum{99.76}{0.06} & \tnum{99.79}{0.05} & \tnum{99.83}{0.02} \\
 & R²(G) & \tnum{99.53}{0.07} & \tnum{99.56}{0.08} & \tnum{99.61}{0.10} & \tnum{99.65}{0.08} & \tnum{99.72}{0.05} \\
 & Acc(C) & \tnum{98.92}{0.53} & \tnum{99.09}{0.58} & \tnum{99.11}{0.53} & \tnum{98.76}{0.63} & \tnum{98.42}{0.72} \\
\bottomrule
\end{tabular}
\end{center}

\textbf{Discussion (Negatives):} As shown in Table \ref{tab:app_negatives}, reducing the number of negative samples has a minimal effect on performance. While there is a slight, consistent downward trend in the equivariant metrics ($R^2(x)$, $R^2(G)$) as negatives decrease, the changes are very small and performance remains high ($>99.2\%$) even with 16x fewer negatives.

\begin{center}
\captionof{table}{Data efficiency: Reduced number of samples per action ($k$) for fitting $\hat{R}(g)$. We vary $k$ from the theoretical minimum $n=3$ up to $3n=9$. Table~\ref{tab:overview} uses $k=12$ ($4n$). We report mean and standard deviation over 5 runs for $n=3$.}\label{tab:app_samples_action}
\scriptsize
\centerline{
\begin{tabular}{lllllllll}
\toprule
 \multicolumn{2}{r}{Samples per Action} & 3 & 4 & 5 & 6 & 7 & 8 & 9 \\
Group Type & Metric &  &  &  &  &  &  &  \\
\midrule
\multirow[t]{3}{*}{SO(n)} & R²(x) & \tnum{99.71}{0.06} & \tnum{6.86}{7.87} & \tnum{89.12}{32.03} & \tnum{99.85}{0.01} & \tnum{99.83}{0.06} & \tnum{99.83}{0.04} & \tnum{99.77}{0.11} \\
 & R²(G) & \tnum{-8.77}{8.56} & \tnum{-9.59}{28.67} & \tnum{88.16}{33.06} & \tnum{99.63}{0.05} & \tnum{99.66}{0.13} & \tnum{99.70}{0.07} & \tnum{99.66}{0.16} \\
 & Acc(C) & \tnum{99.20}{0.52} & \tnum{51.17}{19.08} & \tnum{97.47}{4.53} & \tnum{99.40}{0.47} & \tnum{99.30}{0.74} & \tnum{99.07}{0.94} & \tnum{99.10}{0.68} \\
 \midrule
\multirow[t]{3}{*}{O(n)} & R²(x) & \tnum{99.65}{0.10} & \tnum{4.68}{5.56} & \tnum{99.77}{0.09} & \tnum{99.84}{0.03} & \tnum{99.83}{0.05} & \tnum{99.23}{1.60} & \tnum{99.39}{1.21} \\
 & R²(G) & \tnum{-118.25}{308.67} & \tnum{-0.03}{0.09} & \tnum{99.09}{0.35} & \tnum{99.59}{0.09} & \tnum{99.67}{0.10} & \tnum{98.75}{2.56} & \tnum{98.95}{2.15} \\
 & Acc(C) & \tnum{99.00}{0.82} & \tnum{54.32}{12.03} & \tnum{98.88}{0.81} & \tnum{99.18}{0.50} & \tnum{99.13}{0.90} & \tnum{97.58}{4.59} & \tnum{97.94}{3.98} \\
 \midrule
\multirow[t]{3}{*}{GL(n)} & R²(x) & \tnum{98.46}{1.13} & \tnum{5.56}{9.94} & \tnum{89.63}{28.86} & \tnum{99.63}{0.03} & \tnum{99.76}{0.09} & \tnum{99.80}{0.04} & \tnum{99.80}{0.07} \\
 & R²(G) & \tnum{-0.28}{0.47} & \tnum{-0.07}{0.26} & \tnum{77.00}{28.83} & \tnum{93.25}{1.49} & \tnum{98.93}{0.92} & \tnum{99.48}{0.20} & \tnum{99.56}{0.18} \\
 & Acc(C) & \tnum{99.11}{0.32} & \tnum{35.38}{26.81} & \tnum{98.36}{0.38} & \tnum{98.70}{0.57} & \tnum{98.42}{0.81} & \tnum{98.56}{0.82} & \tnum{98.37}{0.51} \\
\bottomrule
\end{tabular}
}
\end{center}

\textbf{Discussion (Samples per Action):} Table \ref{tab:app_samples_action} confirms that the theoretical minimum of $k=n=3$ samples is insufficient in practice. This is expected, as $k=n$ samples must form a full-rank system in latent space to uniquely identify $\hat{R}(g)$, which is unlikely to occur consistently. Using $k=4$ also leads to instability. However, performance rapidly recovers. With $k=6$ ($2n$), $R^2(G)$ is $>99\%$ for $SO(3)$/$O(3)$ and $>93\%$ for $GL(3)$. Using $k=9$ ($3n$) yields near-perfect identifiability ($>99.5\%$) for $R^2(G)$ across all groups, demonstrating practical applicability with a modest number of samples.

\subsection{Robustness to noise}\label{app:C-noise}
    
To study the effect of noise, we adjust our data-generating process. We introduce a Gaussian noise term $\epsilon \sim N(0, \sigma^2)$ to the group action relationship in the latent space: $x' = R(g_i) x + \epsilon$. We test this for $GL(3)$, varying the noise standard deviation $\sigma$ from $0.0$ to $0.1$ and the number of samples per action ($k$) from $3$ to $12$. All other parameters follow the setup of Table~\ref{tab:overview}.
    
\begin{center}
\captionof{table}{Robustness to noise for $GL(3)$. We add Gaussian noise $\epsilon \sim N(0, \sigma^2)$ to the latent transformation $x' = R(g)x + \epsilon$. We vary the noise standard deviation $\sigma$ and the number of samples per action $k$ ($3=n$, $6=2n$, $9=3n$, $12=4n$) used to fit $\hat{R}(g)$. We report mean and standard deviation over 5 runs.}
\label{tab:app_noise}
\begin{tabular}{lllll}
\toprule
 & Group Type & \multicolumn{3}{r}{GL(n)} \\
 & Metric & R²(x) & R²(G) & Acc(C) \\
Noise Std. & Samples/Action &  &  &  \\
\midrule
\multirow[t]{4}{*}{0.0e+00} & 3 & \tnum{98.46}{1.13} & \tnum{-0.28}{0.47} & \tnum{99.11}{0.32} \\
 & 6 & \tnum{99.63}{0.03} & \tnum{93.25}{1.49} & \tnum{98.70}{0.57} \\
 & 9 & \tnum{99.80}{0.07} & \tnum{99.56}{0.18} & \tnum{98.37}{0.51} \\
 & 12 & \tnum{99.83}{0.02} & \tnum{99.72}{0.05} & \tnum{98.42}{0.72} \\
 \midrule
\multirow[t]{4}{*}{1.0e-05} & 3 & \tnum{97.89}{1.88} & \tnum{-0.30}{0.64} & \tnum{98.89}{0.36} \\
 & 6 & \tnum{99.62}{0.06} & \tnum{93.24}{1.20} & \tnum{98.68}{0.78} \\
 & 9 & \tnum{99.82}{0.04} & \tnum{99.65}{0.07} & \tnum{98.60}{0.64} \\
 & 12 & \tnum{99.82}{0.03} & \tnum{99.70}{0.06} & \tnum{98.48}{0.61} \\
 \midrule
\multirow[t]{4}{*}{1.0e-04} & 3 & \tnum{98.14}{1.17} & \tnum{-28.47}{65.56} & \tnum{99.00}{0.36} \\
 & 6 & \tnum{99.64}{0.07} & \tnum{93.78}{2.07} & \tnum{98.73}{0.68} \\
 & 9 & \tnum{99.83}{0.02} & \tnum{99.65}{0.06} & \tnum{98.45}{0.47} \\
 & 12 & \tnum{99.83}{0.04} & \tnum{99.71}{0.07} & \tnum{98.60}{0.60} \\
 \midrule
\multirow[t]{4}{*}{1.0e-03} & 3 & \tnum{98.58}{0.28} & \tnum{-50.29}{127.95} & \tnum{98.98}{0.51} \\
 & 6 & \tnum{99.58}{0.11} & \tnum{93.33}{1.76} & \tnum{98.51}{0.67} \\
 & 9 & \tnum{99.81}{0.05} & \tnum{99.61}{0.09} & \tnum{98.58}{0.48} \\
 & 12 & \tnum{99.81}{0.04} & \tnum{99.68}{0.08} & \tnum{98.50}{0.70} \\
 \midrule
\multirow[t]{4}{*}{1.0e-02} & 3 & \tnum{88.58}{30.63} & \tnum{-1.02}{1.98} & \tnum{98.98}{0.50} \\
 & 6 & \tnum{99.58}{0.05} & \tnum{93.05}{1.27} & \tnum{98.61}{0.70} \\
 & 9 & \tnum{99.81}{0.04} & \tnum{99.59}{0.08} & \tnum{98.46}{0.81} \\
 & 12 & \tnum{99.83}{0.04} & \tnum{99.69}{0.07} & \tnum{98.54}{0.85} \\
 \midrule
\multirow[t]{4}{*}{1.0e-01} & 3 & \tnum{94.56}{11.34} & \tnum{-27.93}{55.46} & \tnum{98.97}{0.37} \\
 & 6 & \tnum{99.58}{0.06} & \tnum{91.76}{0.23} & \tnum{99.04}{0.56} \\
 & 9 & \tnum{99.77}{0.04} & \tnum{96.98}{0.24} & \tnum{99.05}{0.40} \\
 & 12 & \tnum{99.82}{0.03} & \tnum{97.98}{0.13} & \tnum{98.94}{0.74} \\
\bottomrule
\end{tabular}
\end{center}

\textbf{Discussion (Noise):} The results in Table \ref{tab:app_noise} show that the $R^2(x)$ and $Acc(C)$ metrics are highly robust to noise, remaining stable even at $\sigma=0.1$. As expected, noise primarily affects the identifiability of the group representation ($R^2(G)$). This effect is strongly dependent on the number of samples ($k$) used for the least-squares fit.
With $k=6$ ($2n$), $R^2(G)$ drops from $93.25\%$ (no noise) to $91.76\%$ ($\sigma=0.1$). However, increasing the samples completely mitigates this. With $k=9$ ($3n$), $R^2(G)$ only drops from $99.56\%$ to $96.98\%$. With $k=12$ ($4n$), the performance remains excellent, dropping from $99.72\%$ to $97.98\%$. This demonstrates that the noise in the least-squares problem can be effectively averaged out by using more sample pairs, confirming the method's robustness.

\subsection{Infinite dSprites}\label{app:C-idsprite}

\begin{center}
    \begin{minipage}[b]{0.48\linewidth}
        \centering
        \includegraphics[width=\linewidth]{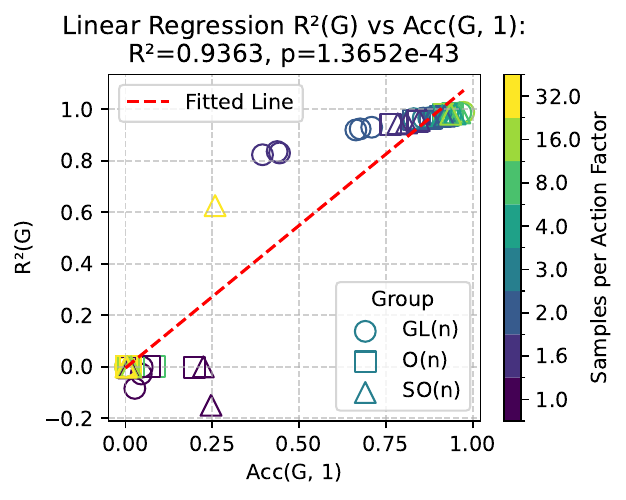}
    \end{minipage}
    \hfill
    \begin{minipage}[b]{0.48\linewidth}
        \centering
        \includegraphics[width=\linewidth]{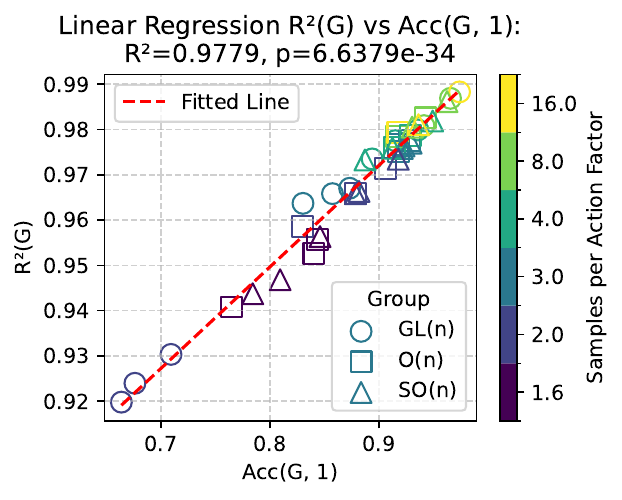}
    \end{minipage}
    \captionof{figure}{\textbf{ $\bm{\text{Acc}(G,1)}$ as observable proxy for $\bm{R^2(G)}$.} Comparison of our Top-1 NN-Lookup $Acc(G,1)$ and group action identifiability metric $R^2(G)$ across EbC models trained on synthetic group datasets with varying samples per action factor $k$. Left: Results for the selected dataset configurations ($n=5$, $d=5$, $|C|=1000$). Right: Filtered results for $Acc(G,1) > 0.6$.}
    \label{fig:toydata-samples-combined}
\end{center}

\paragraph{The Acc(G) metric is suitable for practical hyperparameter selection.}

    In the main paper, we described the R2(G) metric to measure the quality of the group representation with respect to the ground truth group representation and latent factors. On practical experiments, this metric is not observable.
    Instead, on infinite dSprites but also for hyperparameter selection on the toy dataset, we consider the Acc(G, n) metric which performs a kNN lookup only in the reconstructed embedding space. Intuitively, this metric uses the computed embedding, and is a measure of self-consistency of the representation.
    As a validation for this choice, we varied the number of samples per action (one hyperparameter in the fitting procedure) and visualize the correleation between the observable Acc(G, 1) metric and the unobservable, but desired, R2(G) metric in Figure~\ref{fig:toydata-samples-combined}.
    On the full hyperparameter sweep, the metrics are significantly correlated with R2=93.6 (p<1e-10; Figure~\ref{fig:toydata-samples-combined} left). When treating unsuccessful hyperparameter configurations as outliers, i.e., discarding models with Acc(G, 1) < 0.6, this correlation gets even more clear at R2=97.8 (p<1e-10; Figure~\ref{fig:toydata-samples-combined} right).

\paragraph{Hyperparameter selection indicates suitable group dimension on idSprites}

We now leverage the $Acc(G, \cdot)$ metric to perform hyperparameter selection on the idSprites dataset. To provide a full picture, Figure \ref{fig:idsprites-embedding-dimensions} shows 10 variants of the metrics with differently strict criteria, considering top-1 to top-10 accuracies during the NN-lookup.

For 18 and 144 classes, we observe a clear increase in performance in all metrics at d=4 for the least conservative $Acc(G, 10)$ metric, however, the most conservative $Acc(G, 1)$ metric still indicates subpar performance at this dimensionality (below 50\%). All metrics start saturating around a dimensionality of d=6, which would allow an embedding of $R_m \times Z_n \times Z_n$ into 3 circular dimensions.

Figure \ref{fig:action_trace_6d_2d} shows example transitions at this hyperparameter point, which show close-to-perfect recovery of the group structure.
However, as we overparameterize the model in terms of the group dimensions, the $Acc(G, \cdot)$ metrics improve slightly while the qualitative evaluation shows a noticeable decline in embedding and representation quality (Figure \ref{fig:action_trace_8d_2d}). In contrast to the results from Figure~\ref{fig:toydata-samples-combined}, this suggests a mismatch of the $Acc(G, \cdot)$ metric and the $R^2(G)$ metric which is unknown for this dataset.

We hypothesize that this is because the $Acc(G, \cdot)$ metrics a) are not able to  measure the group action prediction exclusively, but instead also measure misclassification of the content, and b) don't measure the degree of the prediction error. Instead any type of classification error is counted with the same weight, no matter if the prediction was off by a small or large amount.

\begin{center}
    \centering
    \includegraphics[width=.8\linewidth]{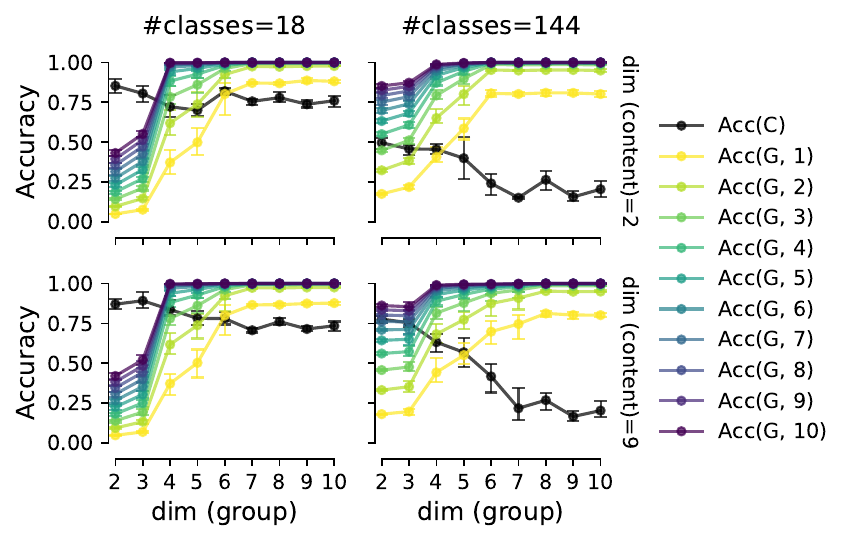}
    \captionof{figure}{\textbf{Embedding Dimension Hyperparameter Sweep for idSprites dataset.} Comparison of EbC for the idSprites dataset across embedding dimensions for the group $n \in [2, .., 10]$ and for the content $d \in [2,9]$ showing the observable Top-K NN-lookup metric $Acc(G, k)$ for $k\in[1, .., 10]$ as well as unobservable content classification accuracy $Acc(C)$. We show the comparison across two configurations of the idSprites dataset. Left: original dSprites configuration, Right: $16\times$ \#shapes, $0.5 \times$ \#scales \#rotations \#translationsX \#translationsY}
    \label{fig:idsprites-embedding-dimensions}
\end{center}

\clearpage

\begin{center}
    \centering
    \includegraphics[width=1.0\linewidth]{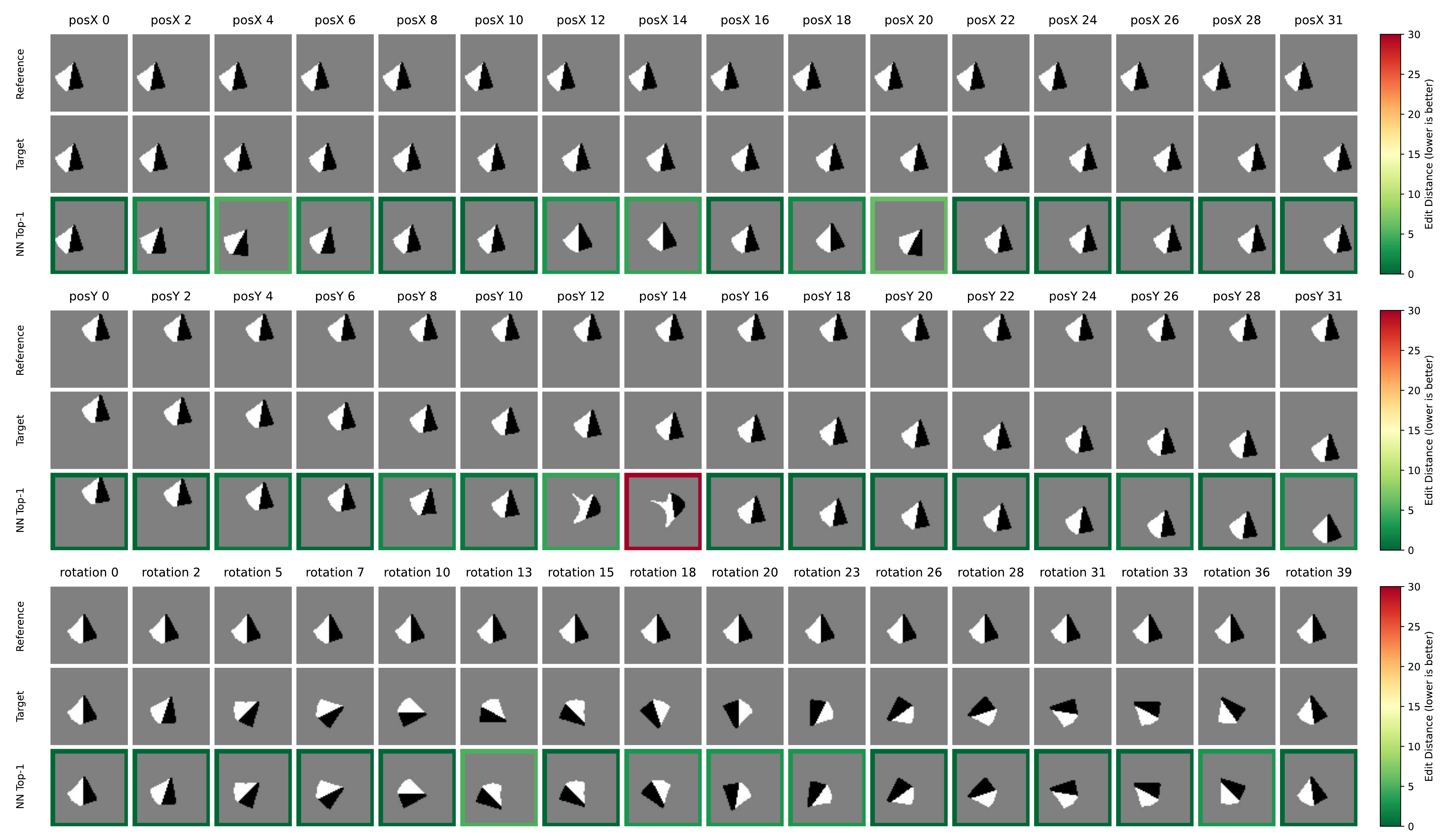}
    \captionof{figure}{\textbf{Top-1 Nearest Neighbor Lookup based on EbC with 6D group dimensions.} For the idSprites dataset configuration with $|C|=18$, we sample a random reference image $\vy$ and generate a sequence of transformed images $\vy'$ (targets) according to the different types of possible group actions. We estimate $\hat \mR$ from the embeddings, compute $\hat \mR\vx$ and perform a top-1 NN lookup across \textit{all} embeddings to get a prediction in image space. In the three blocks of images we show action traces across posX translation (top), posY translations (middle), and rotations (bottom). Within each block we show the reference image (first row), the target images (second row), the top-1 NN image prediction (third row). We color the images in the third row according to the Manhattan distance between the associated true latent factors of the target and predicted image.   
    }
    \label{fig:action_trace_6d_2d}
\end{center}

\begin{center}
    \centering
    \includegraphics[width=1.0\linewidth]{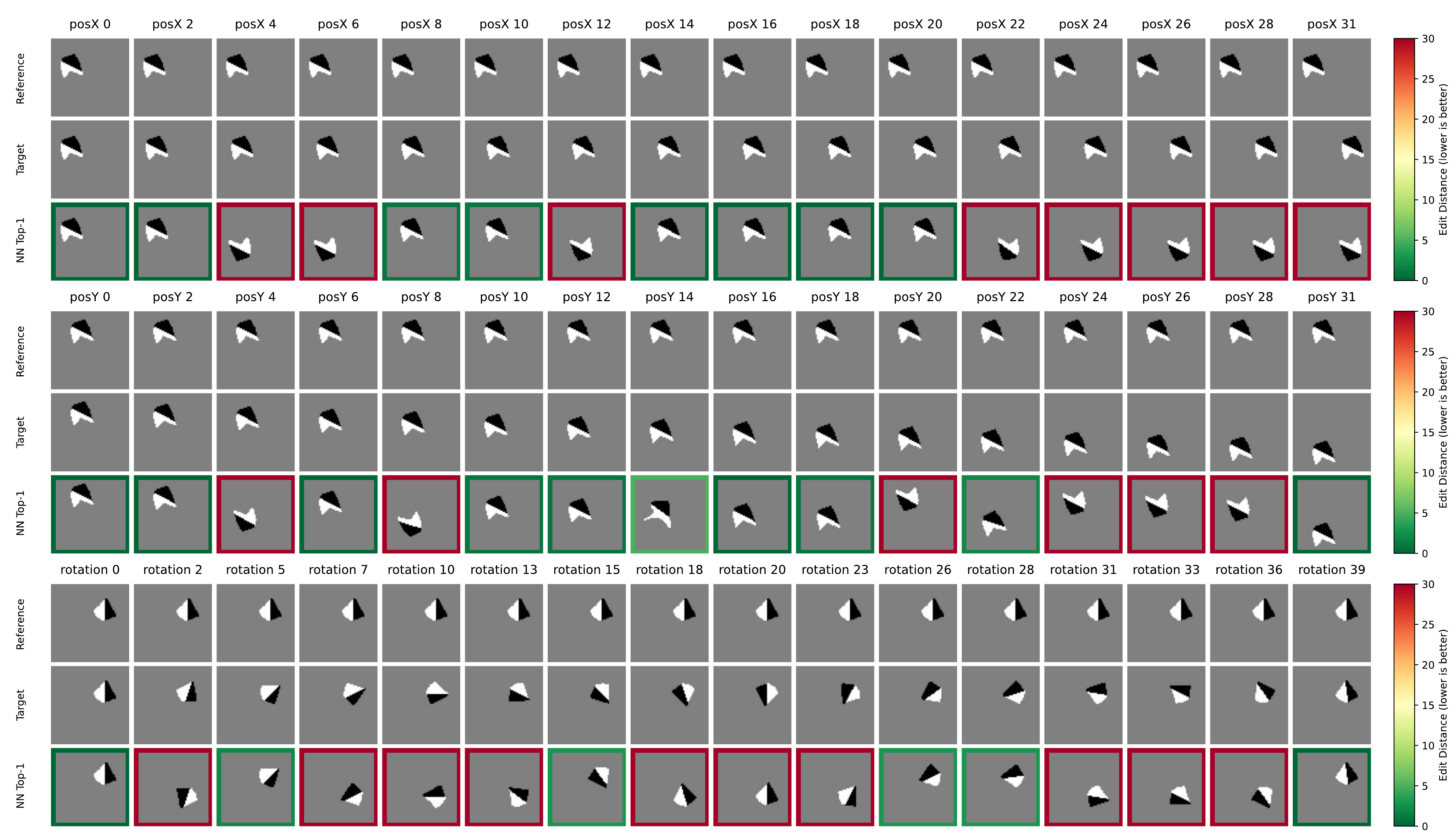}

        \captionof{figure}{\textbf{Top-1 Nearest Neighbor Lookup based on EbC with 8D group dimensions.} Just like Figure~\ref{fig:action_trace_6d_2d} but for EbC with $n=8$.
    }
    \label{fig:action_trace_8d_2d}
\end{center}

\clearpage
\subsection{Application to real-world data}\label{app:C-rat}

Here, we demonstrate how EbC can be used on a real-world time series dataset to find group equivariant embedding spaces.
We apply EbC to the rat hippocampus dataset recorded by \citet{grosmark2016diversity}. We use the data as preprocessed by \citet{zhou_learning_2020}. In this dataset, neural activity from CA1 neurons in hippocampus of multiple rats was recorded via implanted silicon probes. During the recordings of each session, the rats moved on a 1.6m long linear track. Neural activity, the position on the track, and the movement direction of the rat was recorded (see Figure~\ref{fig:rat}A). We apply EbC to the data of the following three rats: ``Achilles'', ``Buddy'' and ``Gatsby''. 

Using EbC for data analysis requires to specify how positive pairs are sampled and how positive pairs are grouped together into the same group action. This is conceptually close to CEBRA \citep{schneider2023learnable}, where auxiliary variables are used to resample the dataset to encode particular neuroscientific hypotheses. As in CEBRA, we leverage the behavioral variables of position and movement direction, but discretize the position variable with a bin size of 1cm. 

We propose three different sampling strategies, depicted in Figure~\ref{fig:rat}B: 
\begin{enumerate}
    \item Group=Position: We consider all possible pairs of data points to be positive pairs and group positive pairs based on the difference in position regardless of the movement direction.
    \item Group=Position \& Direction: We consider all possible pairs data points to be positive pairs and group positive pairs based on the difference in position and difference in movement direction.
    \item Group=Position, Content=Direction: We consider only those pairs of data points where the movement direction does not change and we group positive pairs based on the difference in position.
\end{enumerate}

We compare our method to CEBRA-behavior \citep{schneider2023learnable} which is a state-of-the-art method on this dataset and closely related to EbC in the sense that is also an encoder-only representation learning method with identifiability guarantees. We compare both against CEBRA-Behavior using only the position variable and the variant using both position and movement direction as label information. 
We follow the same dataset split into train, validation and test used in \citep{schneider2023learnable}, and leverage consistency across runs for selecting the best hyperparameters for each model.
To match the ``offset10-model'' used by CEBRA as best as possible, we leverage a time-delay embedding for EbC with receptive field 10.

For the CEBRA models, we run a hyperparameter sweep over latent dimensions of $d\in \{6,8,12,16,32\}$ and train for 10k steps.
For the remaining parameters, we use the default parameters specified via CEBRA's sklearn API. In particular, this means using the cosine distance in the loss (restricting embeddings to the hypersphere), using the "offset10-model" (CNN based model with embeddings normalized to the hypersphere), and "time offset"=10 specifying how positive pairs are related in terms of the time offset between them. 

For EbC, we run a hyperparameter sweep across group (2 or 3) and content dimensions (0, 1, or 4), as well as over the number of samples per action $k$ for which we run experiments with ($k=4d$, $k=8d$). We use a three layer MLP with 128 hidden units per layer for the encoder. We compute the gradients through the linear regression modeling fitting as done in Appendix~\ref{app:C-n_larger_three}. As for CEBRA, we train for 10k steps. 

We compute the following three types of metrics: 
\begin{enumerate}
    \item \textbf{Decoding metrics}: We follow CEBRA's decoding setup using a KNN (with K=3) both for decoding the position and direction from the embeddings. 
    \item \textbf{Consistency metrics}: We use the consistency metrics across runs defined by CEBRA to measure how well different embedding spaces can be aligned to each other. This metric fits an affine transform to map from one embedding space to another and measures the $R^2$ score of the prediction. We use this on 5 runs of the same model with different seeds and compute the consistency between every resulting embedding space with every other embedding space of those runs. 
    \item \textbf{Linear Group Action (Position)}: To measure how well a given embedding space is linearly structured with regard to the position variable, we collect all embedding points related by a specific difference in the position variable (here: 5cm) and fit a linear model for which we measure the $R^2$ score of the prediction. To make this metric fairer, we fit two separate models for pairs based on the direction label. The metric value we report is the average of these two individual $R^2$  scores. 
\end{enumerate}

\begin{figure}[ht]
    \centering
    \includegraphics[width=\linewidth]{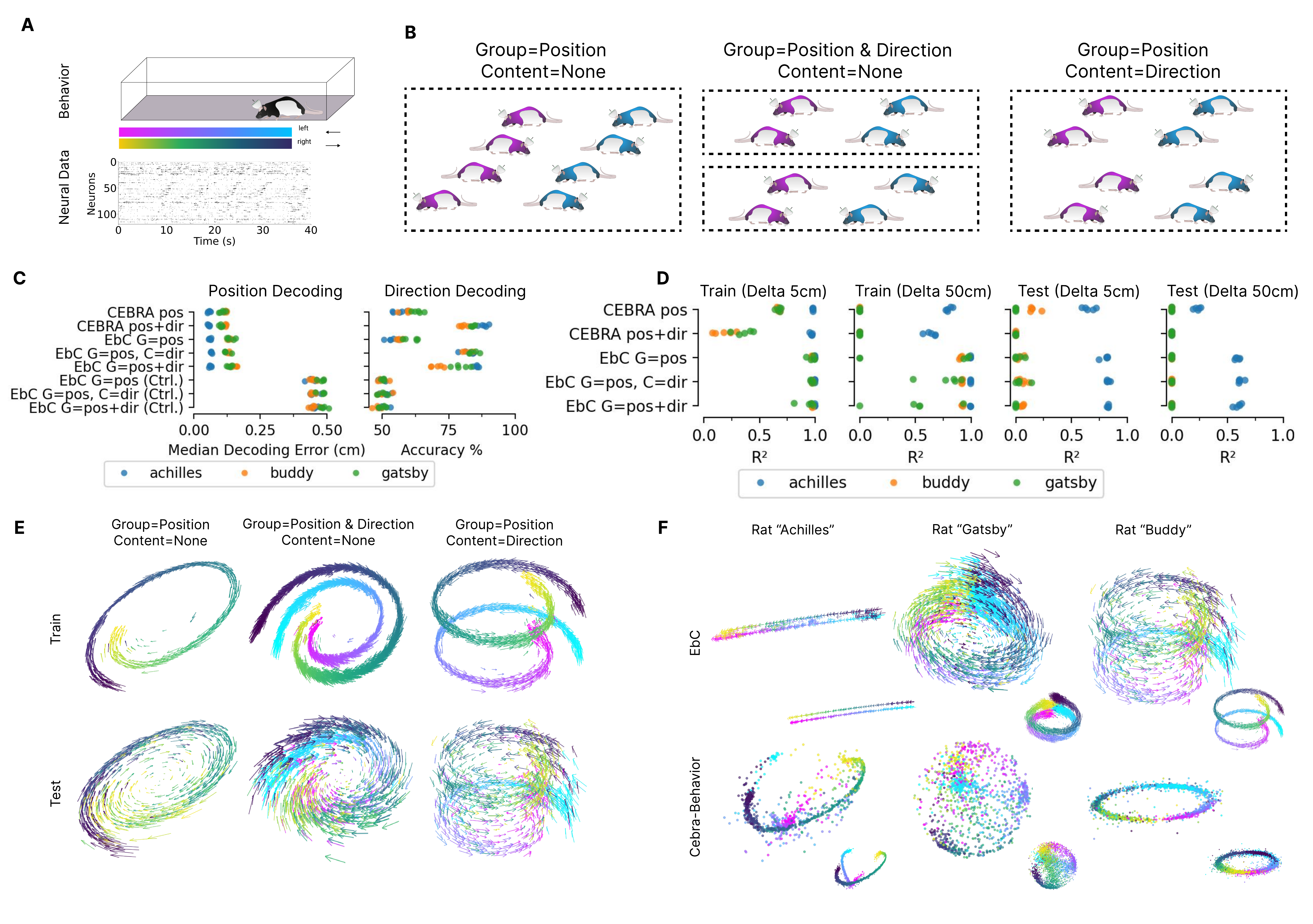}
    \caption{
        Application to neural time series data. (A) We used recordings from the hippocampus of rats running on a 1.6m long linear track. This yields two time series: neural activity ${y_t}$ and behavior (position $p_t$ and direction $d_t$). We use EbC to encode the neural activity with the behavior labels acting as labels determining the positive pairs and thereby the group structure. 
        (B) Illustration of the different sampling strategies employed with EbC. 
        (C) KNN-Decoding performance on the test set for decoding the position and direction label. We compare EbC to CEBRA and include a label shuffle control
        (D) Measure of linearity (for train and test set) of the embedding spaces conditioned on the difference in the position variable. Reporting the $R^2$ score of linear regression models trained on predicting the next point in latent space for pairs of data where the difference in position is 5cm. We also report the $R^2$ of using the same 5cm model on predicting 50cm ahead, i.e. 10 forward predictions in latent space. 
        (E) EbC embeddings of the neural activity of the rat "Buddy" with different variations on how the behavior labels are used to determine positive pairs
        (F) EbC \& CEBRA-Behavior Embeddings on rats Achilles, Gatsby and Buddy. Showing EbC model with group=position and content=direction, and CEBRA model with behavior labels position \& direction. 
    }
    \label{fig:rat}
\end{figure}

\paragraph{EbC reliably encodes position and direction.}

Similar to CEBRA-behavior, EbC produces latent spaces from which both the position and movement direction of the rats can be decoded (Figure~\ref{fig:rat}C). For position decoding, EbC and CEBRA-Behavior achieve comparable test performance: median errors of approximately 6\,cm for ``Achilles'', $\approx$12--14\,cm (EbC) and $\approx$11\,cm (CEBRA-behavior) for ``Gatsby'', and $\approx$13--15\,cm (EbC) and $\approx$12\,cm (CEBRA-behavior) for ``Buddy''. In contrast, models trained with shuffled labels yield errors of $\approx$40--50\,cm, confirming that both methods leverage meaningful neural--behavioral relationships for decoding.

For direction decoding, the EbC variant trained with direction as part of the content performs on par with CEBRA-behavior. As expected, the EbC model trained with both position and direction as group variables performs worse---particularly for ``Buddy''---because this formulation enforces invariance to direction. Consistently, EbC models using only position as the group variable (no content) perform close to the shuffle baseline for direction, reflecting the intended invariance.

\paragraph{EbC recovers latent linear group actions.}

EbC reveals highly linear relationships in the latent space with respect to position (Figure~\ref{fig:rat}D). At a local scale (5\,cm position differences), the recovered embeddings exhibit near-perfect linear structure across all three rats. Notably, a linear model fitted only on 5\,cm transitions generalizes well to a more global scale: predicting 50\,cm ahead (10 steps) while retaining most of its predictive performance.
In contrast, CEBRA-behavior does not consistently capture such linear group structure. At the 5\,cm scale, performance is comparable for ``Achilles'' but weaker for ``Buddy'' and ``Gatsby''. At 50\,cm, the linear trend largely disappears for ``Buddy'' and ``Gatsby''; for ``Achilles'', the $R^2$ drops below 75\%, whereas EbC remains at $\approx$80--100\%. However, for both methods, linear models fitted on the train embeddings do not systematically generalize to the test embeddings, potentially due to the limited data size. For ``Achilles'' which has roughly twice as many neurons recorded as in other sessions, predictive performance generalizes to the test set for EbC.

\paragraph{Sampling strategy induces structured representations.}

As described above, EbC allows explicit control over the structure of the latent space through the sampling strategy defining positive pairs. For ``Buddy'', using \textbf{Group = Position} (direction ignored) yields embeddings that are equivariant to position and invariant to direction (Figure~\ref{fig:rat}E). When \textbf{Group = Position+Direction}, the latent space organizes position as an approximately circular linear trajectory, while direction is encoded as orthogonal movement towards or away from the center of this ``position circle''.
When grouping by position difference while holding direction fixed within a group (\textbf{Group = Position, Content = Direction}), EbC separates the two behavioral variables: direction is encoded along the content dimensions, while the group (style) dimensions capture the linear equivariant structure of position.

\paragraph{Limitations and interpretation of model assumptions.}

Both CEBRA-behavior and EbC provide identifiability guarantees (recovering the latent space up to a linear/affine transformation) under specific assumptions about the data-generating process. Violations of these assumptions break identifiability, and thus each method can be interpreted as testing a specific hypothesis on the structure of the neural--behavioral mapping.
For CEBRA-behavior (Pos+Dir), nearby latent points correspond to samples that are close in time, position, and direction, with latent trajectories constrained to the hypersphere. For EbC, pairs of samples are related through a general linear group action in Euclidean space, where the group action is defined by the sampling strategy (as described above).

Three key differences follow: (1) CEBRA-behavior incorporates temporal proximity into the hypothesis; (2) CEBRA-behavior constrains samples to be close in latent space, whereas EbC enforces the existence of linear relations between groups of pairs; and (3) in our experiments, CEBRA-behavior restricts representations to the hypersphere, while EbC does not. In this sense, EbC imposes a stronger structural hypothesis on the data, particularly through the assumption of linear group actions.

\begin{figure}[ht]
    \centering
    \includegraphics[width=.6\linewidth]{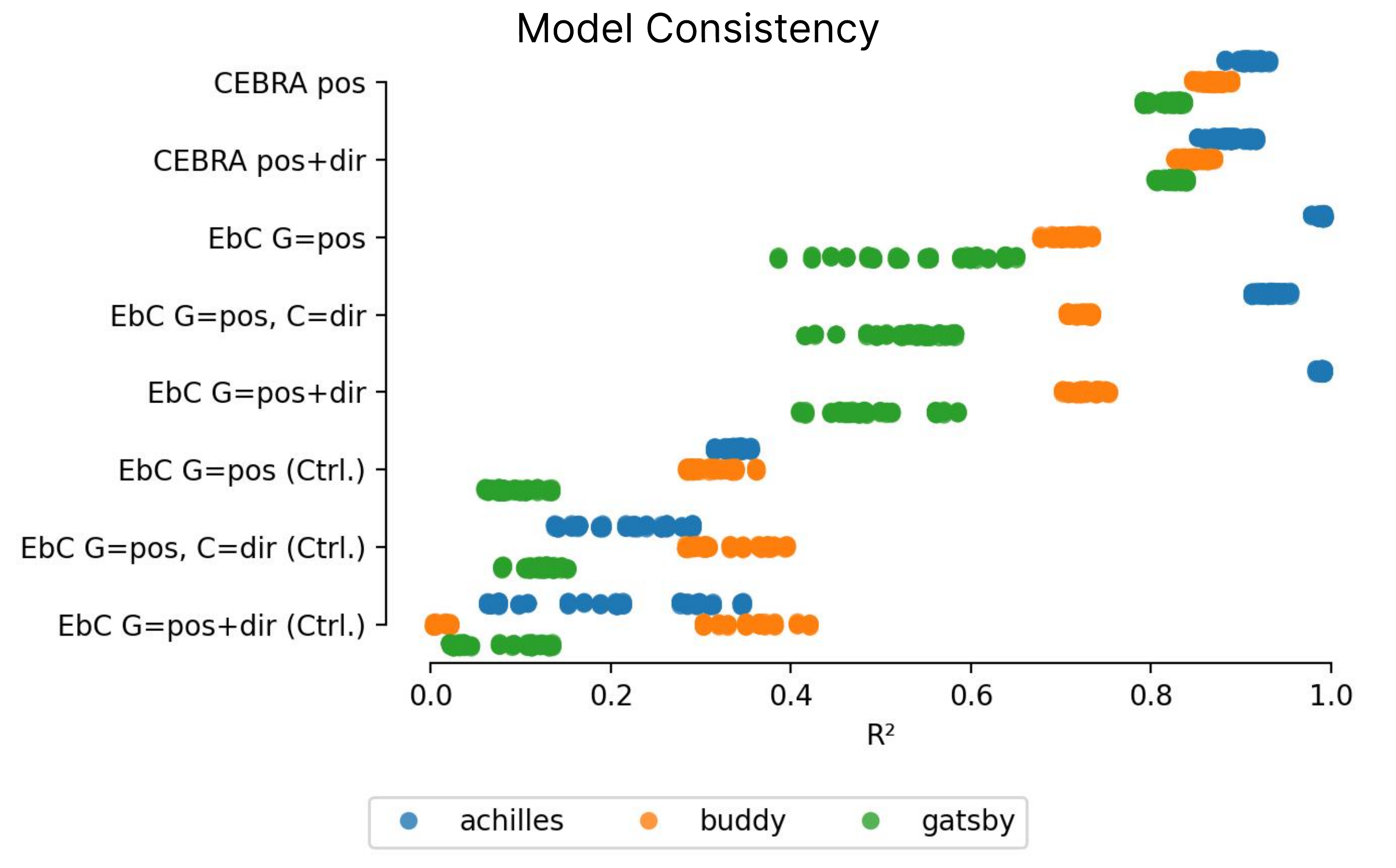}
    \caption{Consistency on the test set across model runs measured via the $R^2$ as defined by \citet{schneider2023learnable}.}
    \label{fig:rat-consistency}
\end{figure}

\paragraph{Consistency across runs.}
Within this hypothesis-testing view, we assess whether embeddings from multiple runs (different random seeds) of the same model are linearly related, as predicted by the identifiability guarantees. For both EbC and CEBRA-behavior, we test whether a linear map exists that aligns embeddings across runs.

As shown in Figure~\ref{fig:rat-consistency}, the highest consistency is observed for EbC (Group = Position and Group = Position+Direction) on ``Achilles'' ($\approx$97--100\%). For ``Buddy'', EbC reaches $\approx$70--75\%, and for ``Gatsby'', $\approx$40--60\%. In contrast, CEBRA-behavior achieves higher average consistency across rats: $\approx$85--93\% (Achilles), $\approx$82--88\% (Buddy), and $\approx$79--84\% (Gatsby), although for Achilles, EbC exceeds CEBRA-behavior.

\begin{figure}[ht]
    \centering
    \includegraphics[width=\linewidth]{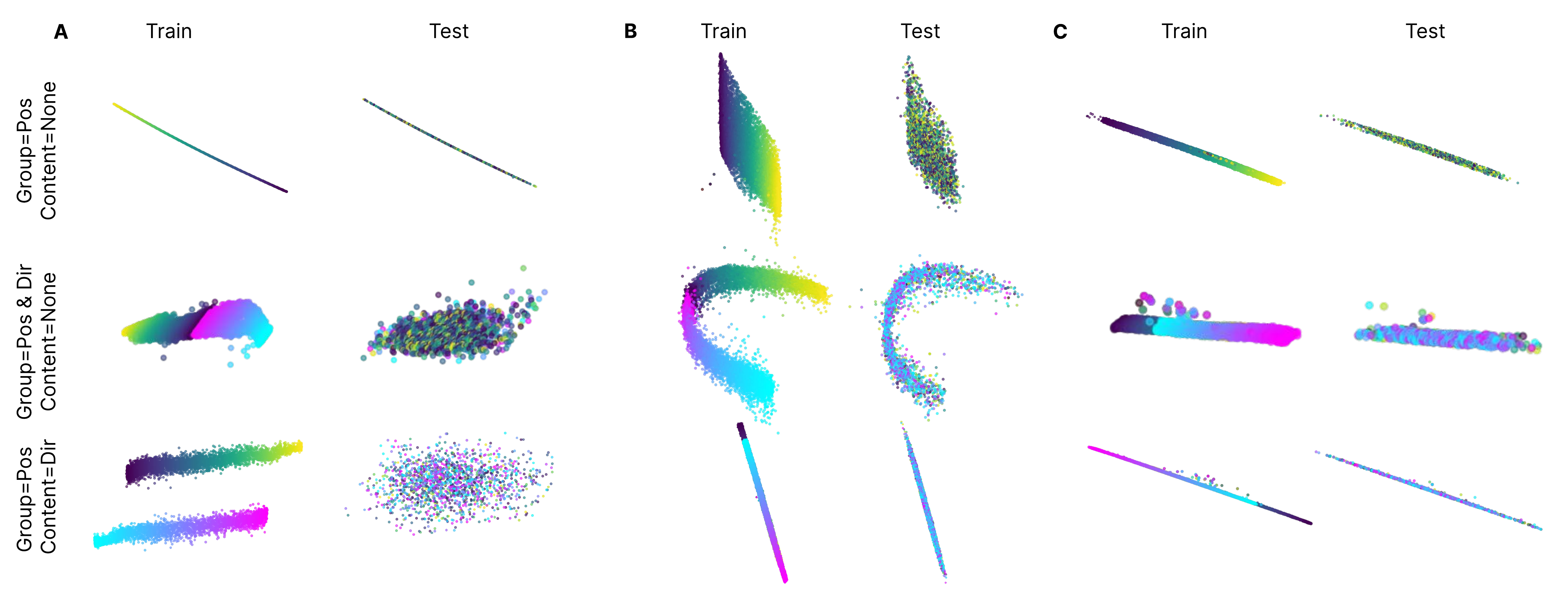}
    \caption{Embeddings of EbC label shuffle control experiments. Shows the same experimental setup as EbC experiments in Figure~\ref{fig:rat} but with the behavior labels randomly permuted before training.}
    \label{fig:rat-shuffle}
\end{figure}

\paragraph{Shuffle control: importance of held-out evaluation.}

The shuffled-label control highlights the necessity of train/(val)/test splits and cautions against interpreting structure from train embeddings alone. Even with randomized behavior labels, EbC imposes its specified group structure on the training latent space, producing structured embeddings across all rats (Figure~\ref{fig:rat-shuffle}). However, this structure differs from the non-shuffled embeddings in its general geometry and does not generalize to the test set when the neural--behavioral relationship is destroyed.
When interpreting embeddings, it is hence important to consider that the mere linear arrangement of points in the embedding is consistently visible in the shuffled embeddings, but additional geometry in the non-shuffled embedding (e.g. alignment of positional information across directions; circular structure for some of the rats) is visible and non-trivial.
On top, the observed structure in the non-shuffled embeddings transfers to the test set as an additional control to rule out the possibility of overfitting.

\clearpage
\section{Additional Literature Discussion and Related Work}\label{app:D-more-literature}

Equivariant representation learning has been a prominent topic in machine learning research from different angles. Below we denote observed data as $\vy \in Y$, with $\vy'= g \cdot \vy$ as short-hand for the transformed data. Embeddings of the observed data are denoted as $\vx, \vx' \in X$, where $X$ is a vector space. Representations of the group are denoted as $R: G \rightarrow GL(n, X)$, and $\vx' = g \cdot \vx = R(g) x$. We focus on methods for learning vector embeddings $\vx$ of the data $\vy$, which are equivariant and/or invariant to group actions $g$. In contrast, orthogonal related work focuses on learning representations of the group given $X$, $Y$ \citep{yang2023GenerativeAdversarial, laird2024MatrixNetLearning}. Another rich area of literature focuses on neural network architectures which are invariant or equivariant to specific predefined groups \citep{cohen2016GroupEquivariant, satorras2021EquivariantGraph, kondor2018GeneralizationEquivariance, finzi2020GeneralizingConvolutional, dehmamy2021AutomaticSymmetry}. 

\paragraph{Non-Invariant Methods.} Invariance to all possible group actions (augmentations) may hinder downstream tasks (e.g. color-invariance for flower-type classification). \citet{xiao2021WhatShould}  propose to use a contrastive learning framework to learn representation space in which every subspace is invariant to only one type of augmentation. \citet{eastwood2023SelfSupervisedDisentanglement}  extend this by introducing a second entropy loss to encourage subspaces to become disentangled.

\paragraph{Equivariant and Invariant Representation with known group action $g$ (explicitly observing g in a parameterized form).} Several encoder-based algorithms building on contrastive learning exist:  E-SSL \citep{dangovski2021EquivariantSelfSupervised} embeds a reference sample of $y$ and multiple transformed samples $\vy_i'= g_i \cdot \vy$ into a latent space $\vx$, $\vx_i'$ which is split into an invariant and equivariant subspace. Invariant parts are learned via SimCLR \citep{chen2020simple} and equivariant parts through an auxiliary task that predicts the parametrized group action $g_i$ from $\vx_i'$. EquiMod \citep{devillers2022EquiModEquivariance} achieves equivariance by embedding pairs of $\vy$, $g \cdot \vy$ by explicitly modeling the group action in latent space via a neural network $\vu_\psi(\vx, g) = \hat{\vx}'$ and minimizing the distance between $\hat{\vx}'$ and $\vx'$. \citet{park2022LearningSymmetrica} additionally parameterize the encoder and latent group action prediction model as G-equivariant neural networks. \citet{garrido2023SelfsupervisedLearning} propose a variant using non-contrastive SSL losses.

Beyond contrastive learning, several approaches leverage (variational) autoencoders. \citet{qi2022LearningGeneralized} encode pairs of $\vy, \vy'$ into latent space and decode the group action $g$ from the concatenated embeddings $\vx, \vx'$. \citet{jin2024LearningGroup} embed $\vy$ into latent space, model the latent space group action as $\vx' = \mR(g) \vx$ and decode $\vy'$ from the predicted $\vx'$ assuming full knowledge of $\mR(g)$. \citet{keurti2023homomorphism} use a linear prediction $\vx' = R(g)\vx$ where $\mR(g)$ is predicted by a neural network from the observed and parametrized $g$, essentially an auto-encoding variant of \citet{garrido2023SelfsupervisedLearning}.
    
\paragraph{Equivariant \& Invariant Representation with \emph{unknown} group $g$ (implicitly observing $g$)} Although the aforementioned approaches share conceptual similarity in their goal of learning invariant and equivariant embeddings of data by modeling linear relations in latent space, EbC does not assume knowledge of the parametrized form of group actions $g$. Instead of observing information about $g$ directly, a second class of methods assumes that two or more pairs of data share the same underlying action, $\vy_i$, $g \cdot \vy_i$. This is a key assumption we also require for EbC.

Encoder-only methods include CARE \citep{gupta2023StructuringRepresentationa}, a contrastive learning framework to learn invariant an locally equivariant representations. CARE encodes two pairs of observations $(\vy_1, g \cdot \vy_1)$ and $(\vy_2, g \cdot \vy_2)$ with the same group action $g$ into a latent space such that the embeddings are related by the same matrix $R_g \in O(d)$ through $\vx_1' = \mR_g \vx_1$ and $\vx_2' = \mR_g \vx_2$. CARE achieves this by introducing an additional loss term to the InfoNCE loss, which maximizes the similarity between the dot products $\vx_1^T \vx_1'$ and $\vx_2^T \vx_2'$. Instead of applying this to a subspace of the embedding space, they apply both the invariant loss term of InfoNCE and their new equivariant loss term to the full embedding space and use a weighting factor to control the trade-off between invariant and equivariant representations. \citet{yerxa2024ContrastiveEquivariantSelfSupervised} propose a variation of CARE in which the embedding space is split into an invariant and equivariant subspace. Group actions are encoded as $\vx_1' = \vx_1 + \vb_g$ and $\vx_2' = \vx_2 + \vb_g$. STL \citep{yu2024SelfsupervisedTransformation} learns representations of (a) $\vy, \vy'$ such that $\vx, \vx'$ are equivariant to the group action $g$ and (b) the group action $g$ itself, again from two pairs of data. Similar to EquiMod \citep{devillers2022EquiModEquivariance}, the latent group action is parameterized by a neural network. However, instead of assuming knowledge about $g$, they use a learned representation $R_g$ as the second input to the neural network: $\vx_1' = u_\psi(\vx_1, \mR_g)$ where $\mR_g$ is predicted by another network $\mR_\theta(x_2, \vx_2')=\mR_g$ from the latent representations of the second observed pair $(\vy_2, g \cdot \vy_2)$. To learn the correct representation of $\mR_g$, EquiMod is extended by a third loss term that maximizes the similarity of $\mR_\theta(\vx_1, \vx_1')$ and $\mR_\theta(\vx_2, \vx_2')$. Like CARE, they don't learn separate subspaces and instead use weighting factors for the invariant and equivariant loss terms to control the trade-off between invariant and equivariant representations in $X$. 

The problem can also be approached with auto-encoding approaches: \citet{winter2022unsupervised} learn embeddings of $\vy$ and $g$ in which the factorize $\vy'= g \cdot \vy$ into an representation $\hat{\vx}$ of $\vy$ which is invariant to $g$ and a representation $\mR_Y(g)$ that represents the action $g$ in the data space $\mY$, such that $\vy' = \mR_{\mY}(g) \delta(\hat{\vx})$ with $\delta$ being the decoder. 

The Unsupervised Neural Fourier Transform (U-NFT) \citep{koyama2023NeuralFourier, miyato2022UnsupervisedLearning} is an auto-encoder framework with a linear group action model in latent space.  For their learning setup, \citet{koyama2023NeuralFourier} assume access to multiple sequences of data points $\{\vy^{(i)}:=(\vy^{(i)}_0, ..., \vy^{(i)}_T\}^N_{i=0}$ with $\vy^{(i)}_{t+1} = g_i \cdot \vy^{(i)}_t$, one sequence for each implicitly observed, but unknown group action $g_i \in G$. An autoencoder reconstructs $y^{(i)}_{t+1}$ from the predicted $\hat{\vx}^{(i)}_{t+1}$, which in turn is predicted from $\hat{\vx}^{(i)}_{t+1} = \mR(g_i) \vx^{(i)}_{t}$. The representation from these examples is estimated using least squares and a post-hoc basis transformation on the set of $\mR(g_i)$ to find a block diagonal representation $\mB(g_i)$ to facilitate disentanglement of the irreducible components of $\mR(g_i)$. \citet{mitchel2024NeuralIsometries} propose a variation of NFT, where the learning setup is restricted to directly produce a block-diagonal representation $\mR(g_i)$, avoiding the need for a post-hoc basis transformation. However, to achieve this, they also restrict $\mR(g_i)$ to be orthogonal.
               
\citet{winter2022unsupervised} and \citet{allingham2024GenerativeModel} only recover an invariant representation and model the group action separately in the data space. All other the methods in this section try to solve the same general task of learning equivariant and invariant representations of the data without explicit knowledge of the underlying group actions. CARE (Gupta et al., 2023) is the closest encoder-only (contrastive) approach to EbC, but constrains the embedding to the hypersphere, which imposes additional structure on the learned representation, while EbC can learn different topologies (e.g., torus in Fig. 4, hypersphere in Fig. 5). In terms of function and data requirements, U-NFT (Miyato et al., 2022; Koyama et al., 2023) is the conceptually closest work, but requires to learn a full generative model of the data. 

\clearpage
\section{NeurIPS Paper Checklist}

\begin{enumerate}

\item {\bf Claims}
    \item[] Question: Do the main claims made in the abstract and introduction accurately reflect the paper's contributions and scope?
    \item[] Answer: \answerYes{} %
    \item[] Justification: Our key theoretical claim is identifiability, which we proof through Theorem 1. Our empirical claim is validated on synthetic datasets as well as infinite dSprites. 
    \item[] Guidelines:
    \begin{itemize}
        \item The answer NA means that the abstract and introduction do not include the claims made in the paper.
        \item The abstract and/or introduction should clearly state the claims made, including the contributions made in the paper and important assumptions and limitations. A No or NA answer to this question will not be perceived well by the reviewers. 
        \item The claims made should match theoretical and experimental results, and reflect how much the results can be expected to generalize to other settings. 
        \item It is fine to include aspirational goals as motivation as long as it is clear that these goals are not attained by the paper. 
    \end{itemize}

\item {\bf Limitations}
    \item[] Question: Does the paper discuss the limitations of the work performed by the authors?
    \item[] Answer: \answerYes{} %
    \item[] Justification: Limitations are discussed in Section~\ref{sec:limitations}. 
    \item[] Guidelines:
    \begin{itemize}
        \item The answer NA means that the paper has no limitation while the answer No means that the paper has limitations, but those are not discussed in the paper. 
        \item The authors are encouraged to create a separate "Limitations" section in their paper.
        \item The paper should point out any strong assumptions and how robust the results are to violations of these assumptions (e.g., independence assumptions, noiseless settings, model well-specification, asymptotic approximations only holding locally). The authors should reflect on how these assumptions might be violated in practice and what the implications would be.
        \item The authors should reflect on the scope of the claims made, e.g., if the approach was only tested on a few datasets or with a few runs. In general, empirical results often depend on implicit assumptions, which should be articulated.
        \item The authors should reflect on the factors that influence the performance of the approach. For example, a facial recognition algorithm may perform poorly when image resolution is low or images are taken in low lighting. Or a speech-to-text system might not be used reliably to provide closed captions for online lectures because it fails to handle technical jargon.
        \item The authors should discuss the computational efficiency of the proposed algorithms and how they scale with dataset size.
        \item If applicable, the authors should discuss possible limitations of their approach to address problems of privacy and fairness.
        \item While the authors might fear that complete honesty about limitations might be used by reviewers as grounds for rejection, a worse outcome might be that reviewers discover limitations that aren't acknowledged in the paper. The authors should use their best judgment and recognize that individual actions in favor of transparency play an important role in developing norms that preserve the integrity of the community. Reviewers will be specifically instructed to not penalize honesty concerning limitations.
    \end{itemize}

\item {\bf Theory assumptions and proofs}
    \item[] Question: For each theoretical result, does the paper provide the full set of assumptions and a complete (and correct) proof?
    \item[] Answer: \answerYes{} %
    \item[] Justification: The full proof for Theorem~\ref{theorem-1} is given in Appendix~\ref{app:A-proof-theorem-1}. The proof for Corollary~\ref{corollary-1} is given directly in the main paper. The full definition of assumptions is stated in the Appendix, and informally outlined in the main paper.
    \item[] Guidelines:
    \begin{itemize}
        \item The answer NA means that the paper does not include theoretical results. 
        \item All the theorems, formulas, and proofs in the paper should be numbered and cross-referenced.
        \item All assumptions should be clearly stated or referenced in the statement of any theorems.
        \item The proofs can either appear in the main paper or the supplemental material, but if they appear in the supplemental material, the authors are encouraged to provide a short proof sketch to provide intuition. 
        \item Inversely, any informal proof provided in the core of the paper should be complemented by formal proofs provided in appendix or supplemental material.
        \item Theorems and Lemmas that the proof relies upon should be properly referenced. 
    \end{itemize}

    \item {\bf Experimental result reproducibility}
    \item[] Question: Does the paper fully disclose all the information needed to reproduce the main experimental results of the paper to the extent that it affects the main claims and/or conclusions of the paper (regardless of whether the code and data are provided or not)?
    \item[] Answer: \answerYes{} %
    \item[] Justification: The key components of the algorithm are outlined in Section~\ref{sec:methods}. Experimental details are given in Section~\ref{sec:experiment-setup}. Additional experimental details are provided in Appendix~\ref{app:B-experiments}.
    \item[] Guidelines:
    \begin{itemize}
        \item The answer NA means that the paper does not include experiments.
        \item If the paper includes experiments, a No answer to this question will not be perceived well by the reviewers: Making the paper reproducible is important, regardless of whether the code and data are provided or not.
        \item If the contribution is a dataset and/or model, the authors should describe the steps taken to make their results reproducible or verifiable. 
        \item Depending on the contribution, reproducibility can be accomplished in various ways. For example, if the contribution is a novel architecture, describing the architecture fully might suffice, or if the contribution is a specific model and empirical evaluation, it may be necessary to either make it possible for others to replicate the model with the same dataset, or provide access to the model. In general. releasing code and data is often one good way to accomplish this, but reproducibility can also be provided via detailed instructions for how to replicate the results, access to a hosted model (e.g., in the case of a large language model), releasing of a model checkpoint, or other means that are appropriate to the research performed.
        \item While NeurIPS does not require releasing code, the conference does require all submissions to provide some reasonable avenue for reproducibility, which may depend on the nature of the contribution. For example
        \begin{enumerate}
            \item If the contribution is primarily a new algorithm, the paper should make it clear how to reproduce that algorithm.
            \item If the contribution is primarily a new model architecture, the paper should describe the architecture clearly and fully.
            \item If the contribution is a new model (e.g., a large language model), then there should either be a way to access this model for reproducing the results or a way to reproduce the model (e.g., with an open-source dataset or instructions for how to construct the dataset).
            \item We recognize that reproducibility may be tricky in some cases, in which case authors are welcome to describe the particular way they provide for reproducibility. In the case of closed-source models, it may be that access to the model is limited in some way (e.g., to registered users), but it should be possible for other researchers to have some path to reproducing or verifying the results.
        \end{enumerate}
    \end{itemize}

\item {\bf Open access to data and code}
    \item[] Question: Does the paper provide open access to the data and code, with sufficient instructions to faithfully reproduce the main experimental results, as described in supplemental material?
    \item[] Answer: \answerYes{} %
    \item[] Justification: The code for our paper is available at \href{https://github.com/dynamical-inference/ebc}{https://github.com/dynamical-inference/ebc}.
    During the phase of double-blind review, we provided an anonymized version of the codebase.
    For additional pointers, see Appendix~\ref{app:B-experiments}. 
    \item[] Guidelines:
    \begin{itemize}
        \item The answer NA means that paper does not include experiments requiring code.
        \item Please see the NeurIPS code and data submission guidelines (\url{https://nips.cc/public/guides/CodeSubmissionPolicy}) for more details.
        \item While we encourage the release of code and data, we understand that this might not be possible, so “No” is an acceptable answer. Papers cannot be rejected simply for not including code, unless this is central to the contribution (e.g., for a new open-source benchmark).
        \item The instructions should contain the exact command and environment needed to run to reproduce the results. See the NeurIPS code and data submission guidelines (\url{https://nips.cc/public/guides/CodeSubmissionPolicy}) for more details.
        \item The authors should provide instructions on data access and preparation, including how to access the raw data, preprocessed data, intermediate data, and generated data, etc.
        \item The authors should provide scripts to reproduce all experimental results for the new proposed method and baselines. If only a subset of experiments are reproducible, they should state which ones are omitted from the script and why.
        \item At submission time, to preserve anonymity, the authors should release anonymized versions (if applicable).
        \item Providing as much information as possible in supplemental material (appended to the paper) is recommended, but including URLs to data and code is permitted.
    \end{itemize}

\item {\bf Experimental setting/details}
    \item[] Question: Does the paper specify all the training and test details (e.g., data splits, hyperparameters, how they were chosen, type of optimizer, etc.) necessary to understand the results?
    \item[] Answer: \answerYes{} %
    \item[] Justification: The most important details are outlined in Section~\ref{sec:experiment-setup}. The full details are given in Appendix~\ref{app:B-experiments}. 
    \item[] Guidelines:
    \begin{itemize}
        \item The answer NA means that the paper does not include experiments.
        \item The experimental setting should be presented in the core of the paper to a level of detail that is necessary to appreciate the results and make sense of them.
        \item The full details can be provided either with the code, in appendix, or as supplemental material.
    \end{itemize}

\item {\bf Experiment statistical significance}
    \item[] Question: Does the paper report error bars suitably and correctly defined or other appropriate information about the statistical significance of the experiments?
    \item[] Answer: \answerYes{} %
    \item[] Justification: We run all experiments across dataset and model seeds, see Section~\ref{sec:experiment-setup} for details. 
    \item[] Guidelines:
    \begin{itemize}
        \item The answer NA means that the paper does not include experiments.
        \item The authors should answer "Yes" if the results are accompanied by error bars, confidence intervals, or statistical significance tests, at least for the experiments that support the main claims of the paper.
        \item The factors of variability that the error bars are capturing should be clearly stated (for example, train/test split, initialization, random drawing of some parameter, or overall run with given experimental conditions).
        \item The method for calculating the error bars should be explained (closed form formula, call to a library function, bootstrap, etc.)
        \item The assumptions made should be given (e.g., Normally distributed errors).
        \item It should be clear whether the error bar is the standard deviation or the standard error of the mean.
        \item It is OK to report 1-sigma error bars, but one should state it. The authors should preferably report a 2-sigma error bar than state that they have a 96\% CI, if the hypothesis of Normality of errors is not verified.
        \item For asymmetric distributions, the authors should be careful not to show in tables or figures symmetric error bars that would yield results that are out of range (e.g. negative error rates).
        \item If error bars are reported in tables or plots, The authors should explain in the text how they were calculated and reference the corresponding figures or tables in the text.
    \end{itemize}

\item {\bf Experiments compute resources}
    \item[] Question: For each experiment, does the paper provide sufficient information on the computer resources (type of compute workers, memory, time of execution) needed to reproduce the experiments?
    \item[] Answer: \answerYes{} %
    \item[] Justification: Appendix~\ref{app:B-experiments} contains a list of compute resources used for the paper. 
    \item[] Guidelines:
    \begin{itemize}
        \item The answer NA means that the paper does not include experiments.
        \item The paper should indicate the type of compute workers CPU or GPU, internal cluster, or cloud provider, including relevant memory and storage.
        \item The paper should provide the amount of compute required for each of the individual experimental runs as well as estimate the total compute. 
        \item The paper should disclose whether the full research project required more compute than the experiments reported in the paper (e.g., preliminary or failed experiments that didn't make it into the paper). 
    \end{itemize}
    
\item {\bf Code of ethics}
    \item[] Question: Does the research conducted in the paper conform, in every respect, with the NeurIPS Code of Ethics \url{https://neurips.cc/public/EthicsGuidelines}?
    \item[] Answer: \answerYes{} %
    \item[] Justification: The paper conforms with the NeurIPS Code of Ethics in every regard.
    \item[] Guidelines:
    \begin{itemize}
        \item The answer NA means that the authors have not reviewed the NeurIPS Code of Ethics.
        \item If the authors answer No, they should explain the special circumstances that require a deviation from the Code of Ethics.
        \item The authors should make sure to preserve anonymity (e.g., if there is a special consideration due to laws or regulations in their jurisdiction).
    \end{itemize}

\item {\bf Broader impacts}
    \item[] Question: Does the paper discuss both potential positive societal impacts and negative societal impacts of the work performed?
    \item[] Answer: \answerNA{} %
    \item[] Justification: As noted in the guidelines, our method is a general-purpose approach for estimating the effect of actions/interventions from observable data. There is broad applicability of such an algorithm, but no specific societal impacts that are particularly tied to the paper.
    \item[] Guidelines:
    \begin{itemize}
        \item The answer NA means that there is no societal impact of the work performed.
        \item If the authors answer NA or No, they should explain why their work has no societal impact or why the paper does not address societal impact.
        \item Examples of negative societal impacts include potential malicious or unintended uses (e.g., disinformation, generating fake profiles, surveillance), fairness considerations (e.g., deployment of technologies that could make decisions that unfairly impact specific groups), privacy considerations, and security considerations.
        \item The conference expects that many papers will be foundational research and not tied to particular applications, let alone deployments. However, if there is a direct path to any negative applications, the authors should point it out. For example, it is legitimate to point out that an improvement in the quality of generative models could be used to generate deepfakes for disinformation. On the other hand, it is not needed to point out that a generic algorithm for optimizing neural networks could enable people to train models that generate Deepfakes faster.
        \item The authors should consider possible harms that could arise when the technology is being used as intended and functioning correctly, harms that could arise when the technology is being used as intended but gives incorrect results, and harms following from (intentional or unintentional) misuse of the technology.
        \item If there are negative societal impacts, the authors could also discuss possible mitigation strategies (e.g., gated release of models, providing defenses in addition to attacks, mechanisms for monitoring misuse, mechanisms to monitor how a system learns from feedback over time, improving the efficiency and accessibility of ML).
    \end{itemize}
    
\item {\bf Safeguards}
    \item[] Question: Does the paper describe safeguards that have been put in place for responsible release of data or models that have a high risk for misuse (e.g., pretrained language models, image generators, or scraped datasets)?
    \item[] Answer: \answerNA{} %
    \item[] Justification: We report results on small-scale synthetic datasets. 
    \item[] Guidelines:
    \begin{itemize}
        \item The answer NA means that the paper poses no such risks.
        \item Released models that have a high risk for misuse or dual-use should be released with necessary safeguards to allow for controlled use of the model, for example by requiring that users adhere to usage guidelines or restrictions to access the model or implementing safety filters. 
        \item Datasets that have been scraped from the Internet could pose safety risks. The authors should describe how they avoided releasing unsafe images.
        \item We recognize that providing effective safeguards is challenging, and many papers do not require this, but we encourage authors to take this into account and make a best faith effort.
    \end{itemize}

\item {\bf Licenses for existing assets}
    \item[] Question: Are the creators or original owners of assets (e.g., code, data, models), used in the paper, properly credited and are the license and terms of use explicitly mentioned and properly respected?
    \item[] Answer: \answerYes{} %
    \item[] Justification: Appendix~\ref{app:B-experiments} outlines all relevant license information. 
    \item[] Guidelines:
    \begin{itemize}
        \item The answer NA means that the paper does not use existing assets.
        \item The authors should cite the original paper that produced the code package or dataset.
        \item The authors should state which version of the asset is used and, if possible, include a URL.
        \item The name of the license (e.g., CC-BY 4.0) should be included for each asset.
        \item For scraped data from a particular source (e.g., website), the copyright and terms of service of that source should be provided.
        \item If assets are released, the license, copyright information, and terms of use in the package should be provided. For popular datasets, \url{paperswithcode.com/datasets} has curated licenses for some datasets. Their licensing guide can help determine the license of a dataset.
        \item For existing datasets that are re-packaged, both the original license and the license of the derived asset (if it has changed) should be provided.
        \item If this information is not available online, the authors are encouraged to reach out to the asset's creators.
    \end{itemize}

\item {\bf New assets}
    \item[] Question: Are new assets introduced in the paper well documented and is the documentation provided alongside the assets?
    \item[] Answer: \answerNA{} %
    \item[] Justification: We do not release new assets. 
    \item[] Guidelines:
    \begin{itemize}
        \item The answer NA means that the paper does not release new assets.
        \item Researchers should communicate the details of the dataset/code/model as part of their submissions via structured templates. This includes details about training, license, limitations, etc. 
        \item The paper should discuss whether and how consent was obtained from people whose asset is used.
        \item At submission time, remember to anonymize your assets (if applicable). You can either create an anonymized URL or include an anonymized zip file.
    \end{itemize}

\item {\bf Crowdsourcing and research with human subjects}
    \item[] Question: For crowdsourcing experiments and research with human subjects, does the paper include the full text of instructions given to participants and screenshots, if applicable, as well as details about compensation (if any)? 
    \item[] Answer: \answerNA{} %
    \item[] Justification: The paper does not involve crowdsourcing nor research with human subjects.
    \item[] Guidelines:
    \begin{itemize}
        \item The answer NA means that the paper does not involve crowdsourcing nor research with human subjects.
        \item Including this information in the supplemental material is fine, but if the main contribution of the paper involves human subjects, then as much detail as possible should be included in the main paper. 
        \item According to the NeurIPS Code of Ethics, workers involved in data collection, curation, or other labor should be paid at least the minimum wage in the country of the data collector. 
    \end{itemize}

\item {\bf Institutional review board (IRB) approvals or equivalent for research with human subjects}
    \item[] Question: Does the paper describe potential risks incurred by study participants, whether such risks were disclosed to the subjects, and whether Institutional Review Board (IRB) approvals (or an equivalent approval/review based on the requirements of your country or institution) were obtained?
    \item[] Answer: \answerNA{} %
    \item[] Justification: The paper does not involve crowdsourcing nor research with human subjects.
    \item[] Guidelines:
    \begin{itemize}
        \item The answer NA means that the paper does not involve crowdsourcing nor research with human subjects.
        \item Depending on the country in which research is conducted, IRB approval (or equivalent) may be required for any human subjects research. If you obtained IRB approval, you should clearly state this in the paper. 
        \item We recognize that the procedures for this may vary significantly between institutions and locations, and we expect authors to adhere to the NeurIPS Code of Ethics and the guidelines for their institution. 
        \item For initial submissions, do not include any information that would break anonymity (if applicable), such as the institution conducting the review.
    \end{itemize}

\item {\bf Declaration of LLM usage}
    \item[] Question: Does the paper describe the usage of LLMs if it is an important, original, or non-standard component of the core methods in this research? Note that if the LLM is used only for writing, editing, or formatting purposes and does not impact the core methodology, scientific rigorousness, or originality of the research, declaration is not required.
    \item[] Answer: \answerNA{} %
    \item[] Justification: The core method development in this research does not involve LLMs as any important, original, or non-standard components.
    \item[] Guidelines:
    \begin{itemize}
        \item The answer NA means that the core method development in this research does not involve LLMs as any important, original, or non-standard components.
        \item Please refer to our LLM policy (\url{https://neurips.cc/Conferences/2025/LLM}) for what should or should not be described.
    \end{itemize}

\end{enumerate}

\end{document}